
\documentclass[letterpaper, 10 pt, conference]{ieeeconf}  

\IEEEoverridecommandlockouts                              

\overrideIEEEmargins

\addtolength{\topmargin}{.055in}
\addtolength{\textwidth}{-0.02in}

\pdfminorversion=4 




\usepackage{graphics} 
\usepackage{epsfig} 
\usepackage{graphicx}
\usepackage{epstopdf} 
\usepackage{mathptmx} 
\usepackage{times} 
\usepackage{amsmath}
\usepackage{amssymb}
\usepackage{microtype}
\usepackage{dsfont}

\usepackage{algorithm}
\usepackage{algpseudocode}
\usepackage{subcaption}
\usepackage{booktabs} 
\usepackage{multirow}
\usepackage{hyperref}
\hypersetup{breaklinks=true,colorlinks=true,linkcolor=blue,citecolor=blue}
\usepackage{color}

\newcommand{\vect}[1]{\mathbf{#1}}

\DeclareMathOperator{\E}{\mathbb{E}}
\DeclareMathOperator{\Prob}{\mathbb{P}}
\DeclareMathOperator{\Var}{\text{Var}}
\DeclareMathOperator{\Cov}{\text{Cov}}

\newtheorem{remark}{\textbf{Remark}}
\newtheorem{proposition}{\textbf{Proposition}}

\newtheorem{definition}{\textbf{Definition}}

\newtheorem{assumption}{\textbf{Assumption}}

\newtheorem{lemma}{Lemma}
\newtheorem{corollary}{\textbf{Corollary}}

\setlength{\textfloatsep}{30pt plus 1.0pt minus 2.0pt}

\title{\LARGE \bf
Sampling Complexity of Path Integral Methods for Trajectory Optimization}

\author{Hyung-Jin Yoon$^{1}$, Chuyuan Tao$^{2}$, Hunmin Kim$^{2}$, Naira Hovakimyan$^{2}$, and Petros Voulgaris$^{1}$
\thanks{*Research supported by NSF CPS \#1932529, NSF CMMI \#1663460, UNR internal funding, and UIUC STII-21-06.}
\thanks{$^{1}$Hyung-Jin Yoon and Petros Voulgaris are with the Department of Mechanical Engineering, University of Nevada, Reno, NV 89557, USA
        {\tt\small \{hyungjiny, pvoulgaris\}@unr.edu}}%
\thanks{$^{2}$Chuyuan Tao, Hunmin Kim, and Naira Hovakimyan are with the the Department of
Mechanical Science and Engineering, University of Illinois at Urbana-Champaign, Urbana, IL 61801, USA
        {\tt\small \{hunmin, chuyuan2, nhovakim\}@illinois.edu}}%
}

\begin{document}

\maketitle
\thispagestyle{empty}
\pagestyle{empty}

\begin{abstract}
The use of random sampling in decision-making and control has become popular with the ease of access to graphic processing units that can generate and calculate multiple random trajectories for real-time robotic applications. In contrast to sequential optimization, the sampling-based method can take advantage of parallel computing to maintain constant control loop frequencies. Inspired by its wide applicability in robotic applications, we calculate a sampling complexity result applicable to general nonlinear systems considered in the path integral method, which is a sampling-based method. The result determines the required number of samples to satisfy the given error bounds of the estimated control signal from the optimal value with the predefined risk probability. The sampling complexity result shows that the variance of the estimated control value is upper-bounded in terms of the expectation of the cost. Then we apply the result to a linear time-varying dynamical system with quadratic cost and an indicator function cost to avoid constraint sets.
\end{abstract}

\section{Introduction}
 While making important decisions, it is common to simulate potential future consequences with arbitrary decisions before taking any actions. For critical decision-making, we use known models for simulating scenarios (trajectories) and consider all possible actions (samples from decision set). The aforementioned decision-making procedure that seemingly aligns with our common sense has been implemented in robotic applications. To name a few, rapidly exploring random trees (RRT)~\cite{lavalle1998rapidly} uses random samples to find collision-free paths connecting an initial position to a goal position. Randomly generated trajectories out of a set of motion primitives were used for quadcopter doing impressive maneuvers, including returning a moving ball with a tennis racket mounted on a quadcopter~\cite{mueller2015computationally}. More recently, a sampling-based trajectory optimization~\cite{kappen2005linear} was implemented as a model predictive control to steer a ground vehicle~\cite{williams2016aggressive}.

Compared to the gradient-based optimization applied to trajectory optimization, the sampling-based methods~\cite{lavalle1998rapidly, mueller2015computationally, williams2016aggressive} do not need gradients that might not exist for the optimization problem with discontinuous components. Furthermore, in the latter two sampling-based trajectory optimization applications~\cite{mueller2015computationally, williams2016aggressive}, each sample trajectory can be calculated independently. So, the sampling-based optimization can use thousands of random trajectories generated in parallel, taking advantage of the graphic processing unit (GPU). Since the number of sample trajectories is fixed, in contrast to the uncertain number of iterations with the gradient-based optimization, the sampling-based method would be suitable for applications where fixed computation time for each control loop iteration is required. 

In this paper, we focus on the path integral method~\cite{kappen2005path} that provides a mathematically sound methodology for determining optimal control based on the stochastic sampling of the trajectories. Like other sampling-based methods, the path integral (PI) method does not require gradients of the dynamics or cost functions. This provides flexibility in the form of model or the cost information. For example, the convolutional neural network was used for the cost functions in~\cite{drews2017aggressive}, and nonlinear basis functions were used to model the dynamics~\cite{williams2016aggressive}. Due to the aforementioned advantages, the PI methods find more applications. The PI methods were applied to multiagent systems on cooperative missions~\cite{wan2021cooperative}. The PI methods were extended to incorporate ensemble models approaches~\cite{abraham2020model}. Also, the PI was integrated into navigation for an unmanned-aerial-vehicle that explores while learning the cost functions~\cite{mohamed2020model}.   

In practice, the PI methods typically determine the optimal control using Monte-Carlo (MC) integration~\cite{mackay1998introduction}, i.e., sample-based approximation. Despite the increasing application of the PI methods, there are not enough studies on sampling complexity analysis for PI methods, to the best of the authors' knowledge. This paper studies the required number of samples for discrete-time dynamic systems and shows how the required number of samples depends on the nominal control, cost, and dynamics. We also provide illustrative examples that show how inner loop stabilization that is analogous to using feedback control to steer covariance~\cite{yin2021improving} and constraining the state values, can be useful for reducing sampling complexity.

\subsection{Relevant Research and Contribution}
There are existing papers~\cite{lorenzen2016constraint, mammarella2020computationally, alamo2009randomized} on sampling complexities for stochastic model predictive control. The aforementioned papers~\cite{lorenzen2016constraint, mammarella2020computationally, alamo2009randomized} focus on the certainties of satisfying constraints while solving optimal control problems, e.g., linearly constrained quadratic programming. The certainty of the constraint satisfaction can be calculated using a concentration inequality, e.g., Chernoff bound~\cite{chernoff1952measure} was used in~\cite{alamo2009randomized}. The bounds depend on the complexity of the constraint function using \emph{Vapnik–Chervonenkis dimension}~\cite{vapnik2015uniform}. In contrast, the applications of the PI methods~\cite{williams2016aggressive, wan2021cooperative, mohamed2020model} consider the constraints within the cost functions as penalties.
The PI methods do not solve a linear/quadratic programming for optimal control as in~\cite{lorenzen2016constraint, mammarella2020computationally, alamo2009randomized}, but generates sample trajectories and gets weight-averaged control out of the random control samples where the weight depends on the cost of trajectories, in the form of MC integration. This paper uses concentration inequalities, i.e., Chebyshev inequality~\cite{markov1884certain} and Hoeffding's inequality~\cite{hoeffding1994probability}, in the MC integration that estimates the optimal control in the PI methods.

We obtain a sampling complexity bound given the expectation of the random trajectory cost that is applicable to nonlinear systems, providing guidance to the required number of samples for desired approximation errors. Furthermore, we apply the bound to discrete-time linear systems with  cost functions that consider quadratic state-cost function and indicator function of the set to avoid, e.g., an obstacle that is a convex hull. With this specific system and the cost function, we show the sampling complexity's dependence on the system matrix's stability. We also provide illustrative examples that suggest the advantages of using inner-loop stabilizing control when applying the PI methods.

\section{Preliminaries}
\subsection{Notations}
We use $t$ to denote both the discrete time index and the continuous time variable. The superscript $(n)$ denotes the sample index, e.g., $X^{(n)}$ denotes the $n$th sample of the random variable $X$.
For a matrix $\mathbf{A}$, $\mathbf{A}^\top$ and  $\mathbf{A}^{-1}$ denote the transpose and the inverse of $\mathbf{A}$, respectively. And $\text{diagonal}(\overline{\vect{0}})$ denotes diagonal matrix with zero vector $\overline{\vect{0}}$. For a symmetric matrix $\mathbf{S}$, $\mathbf{S}>0$ and $\mathbf{S}\geq0$ indicate that $\mathbf{S}$ is positive definite and positive semi-definite, respectively. The matrix $\mathbf{I}$ denotes the identity matrix with an appropriate dimension. We use $\Vert \cdot \Vert$ to denote the standard Euclidean norm for a vector or an induced matrix norm if it is not specified, $\E[\cdot]$ to denote the expectation, and $\text{Var}(\cdot)$ to denote the variance. For a vector $\vect{a}$,  $[\vect{a}]_i$ denotes the $i$th element in the vector $\vect{a}$. For a matrix $\vect{A}$,  $[\vect{A}]_{i,j}$ denotes the element at the $i$th row and the $j$th column in the matrix  $\vect{A}$. We denote the minimum singular value and the maximum singular value of the matrix $\vect{A}$ as $\sigma_\text{min}(\vect{A})$ and $\sigma_\text{max}(\vect{A})$ respectively. Also, We denote the minimum eigenvalue value and the $i$th eigenvalue of matrix $\vect{A}$ as $\lambda_\text{min}(\vect{A})$ and $\lambda_\text{i}(\vect{A})$ respectively. And we denote the product of matrices as $\Pi_{i=1}^n \vect{A}_i = \vect{A}_1\vect{A}_2\cdots\vect{A}_n$.

\subsection{Path Integral Methods}
\begin{figure}[thpb]
\centering
 \includegraphics[width=0.35\textwidth]{ 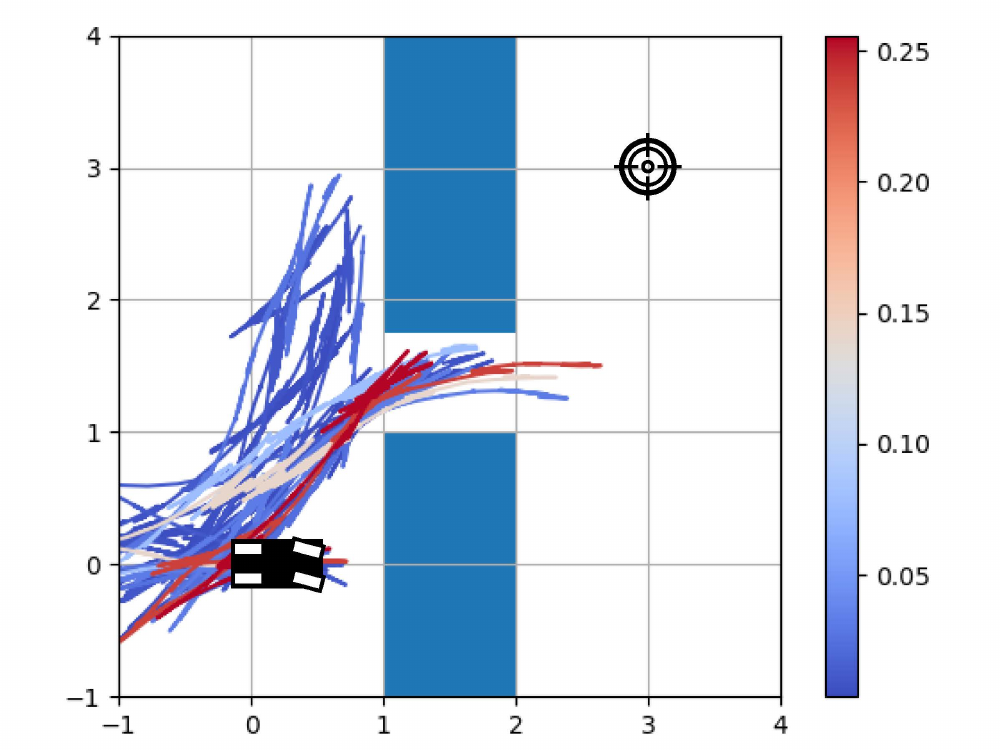}
 \caption{The first ten trajectories out of 10000 samples with the greatest weights (red: greater weights, blue: lower weights)}
\medskip
\label{fig:random_trajectories}
\vspace{-0.35in}
\end{figure}
The PI methods~\cite{williams2016aggressive,kappen2005path} consider the following continuous time stochastic differential equation that is control affine as
\begin{equation*}
    d\vect{x}(t) = \left(f(\vect{x}(t), t) + g(\vect{x}(t), t)\vect{u}(\vect{x}(t), t)\right)dt + \vect{B}(\vect{x}(t), t)d\xi(t),
\end{equation*}
where $\vect{x}(t)\in\mathbb{R}^n$ denotes the state vector at time $t$, $\vect{u}(\vect{x}(t), t)\in\mathbb{R}^m$ denotes the control vector, and $ \xi(t)\in\mathbb{R}^m$ denotes Brownian motion. In many practical models, the disturbance (random perturbation) is directly added to the control input. Hence, for the ease of notation, let us suppose $g(\vect{x}(t), t) = \vect{B}(\vect{x}(t), t)$ as follows:
\begin{equation*}
    d\vect{x}(t) = \left(f(\vect{x}(t), t) + \vect{B}(\vect{x}(t), t)\vect{u}(\vect{x}(t), t)\right)dt + \vect{B}(\vect{x}(t), t)d \xi(t).
\end{equation*}

The PI methods aim to solve the finite time horizon optimal control problem with the following value function
\begin{equation}\label{eq:optimal_control_eqn}
\begin{aligned}
    &V(\vect{x}(t_0), t_0) = \min_{\vect{u}} \E \bigg[ \phi(\vect{x}_T, T) \\
    &+\int_{t_0}^T \left( q(\vect{x}(t), t) + \frac{1}{2} \vect{u}(\vect{x}(t), t)^\top \vect{R}(\vect{x}(t))\vect{u}(\vect{x}(t), t) \right) dt \bigg],
\end{aligned}
\end{equation}
where $\phi(\vect{x}_T, T)$ denotes the terminal cost with the terminal state $\vect{x}_T$, $q(\vect{x}(t), t)$ denotes the state dependent running cost and $\vect{R}$ is the weight matrix for the control effort.

The PI methods use the exponential transformation\footnote{This specific exponential transformation that was first introduced in~\cite{fleming1977exit} casts the stochastic optimal control into the particular structure where the determined optimal control depends on the parameter $\lambda$.} of the value function as
\begin{equation*}
    V(\vect{x}(t), t) = -\lambda\log(\Psi(\vect{x}(t), t))
\end{equation*}
with the Hamilton-Jacobi-Bellman (HJB) equation associated with the value function. The transformation makes the HJB a PDE that is linear in $\Psi$. The HJB is usually solved backward in time. However, the linearity allows us to reverse the direction of computation, replacing it by a diffusion process~\cite{kappen2005path, kappen2005linear}. Using the random trajectories calculated foward-pass as shown in Figure~\ref{fig:random_trajectories}, one can calculate the stochastic optimal control as a ratio of the expectations (for detailed derivations of the control law, see~\cite{kappen2005path, kappen2005linear}):
\begin{equation}
    \vect{u}^*(t) dt = \vect{u}(\vect{x}(t),t) dt + \frac{\E[e^{-(1/\lambda)S(\tau)}d \xi(t)]}{\E[e^{-(1/\lambda)S(\tau)}]},
\end{equation}
where $\vect{u}(\vect{x}(t),t)$ is the nominal control that is  initially designed  to be optimized and 
\begin{equation}\label{eq:stochastic_cost}
    S(\tau) = \phi(\vect{x}_T) + \int_{t_0}^T q(\vect{x}(t), t) dt
\end{equation}
is the cost-to-go of the state-dependent cost of a random trajectory $\tau$. The continuous-time trajectories are sampled as a \emph{discretized} system $\vect{x}_{t+1} =\vect{x}_t + d\vect{x}_t$ according to
\begin{equation}\label{eq:DT_Diffusion}
\begin{aligned}
    d\vect{x}_t &= \left(f(\vect{x}_t, t) + \vect{B}(\vect{x}_t, t)\vect{u}(\vect{x}_t, t)\right) \Delta t + \vect{B}(\vect{x}_t, t) \delta_t \sqrt{\Delta t} \\
    &= \tilde{f}(\vect{x}_t, t) \Delta t + \vect{B}(\vect{x}_t, t) \delta_t \sqrt{\Delta t},
\end{aligned}
\end{equation}
where $\delta_t$ is the Gaussian random vector with independent and identically distributed (i.i.d.) standard normal Gaussian random variables, i.e., $[\delta_t]_i\sim\mathcal{N}(0,1)$, and $\Delta t$ denotes the time step of the time-discretization using Euler–Maruyama method~\cite{platen2010numerical}. Then, the discretized version of the optimal control is 
\begin{equation}\label{eq:DT_optimal_control}
    \vect{u}_t^* = \vect{u}_t +  \frac{\E[e^{-(1/\lambda)S(\tau) }(\mathbf{\delta}_t/\sqrt{\Delta t}) ]}{\E[e^{-(1/\lambda)S(\tau) }]},
\end{equation}
where $\vect{u}_t$ is discrete time nominal control and
\begin{equation}\label{eq:DT_stochastic_cost}
    S(\tau) \approx \phi(\vect{x}_T) + \sum_{t=0}^{T-1} q(\vect{x}_t, t) \Delta t.
\end{equation}
The above expectations~\eqref{eq:DT_optimal_control} can be estimated with the cost of the sample trajectories using MC integration as
\begin{equation}
\label{eq:MC_approx_optimal_control}
    \hat{\vect{u}}_t^* \approx  \vect{u}_t  + \frac{\frac{1}{N}\sum_{n=1}^N\left(e^{-(1/\lambda)S^{(n)}} (\mathbf{\delta}_t/\sqrt{\Delta t}) \right)}{\frac{1}{N}\sum_{n=1}^N\left( e^{-(1/\lambda)S^{(n)}}\right)},
\end{equation}
where $N$ is the number of random trajectories sampled from the discrete time diffusion in~\eqref{eq:DT_Diffusion} and $S^{(n)}$ denotes the cost of the $n$th sample trajectory.  
\begin{remark}
Note the the exponential weighted sum in~\eqref{eq:MC_approx_optimal_control} is also called as soft-max function with the $\lambda$ as softness parameter. The function's behavior depends on the choice of $\lambda$.
For example,
with a very low value of $\lambda$, the PI simply chooses the sample trajectory with the minimum cost~\cite{kappen2005path}. 
\end{remark}

\subsection{Monte Carlo Integration of Expectation}
The expectations in~\eqref{eq:DT_optimal_control} are approximately calculated using the MC integration as in~\eqref{eq:MC_approx_optimal_control}.
The MC integration estimates the expectation
\begin{equation*}
    \E [ g(X) ] = \int_X g(x) p(x) dx
\end{equation*}
with the empirical mean as
\begin{equation*}
    \bar{g}_N = \frac{1}{N}\sum_{n=1}^N g(x^{(n)}),
\end{equation*}
where $x^{(n)} \sim p$ is the $n$th sample from the density $p$, and $N$ denotes the number of samples, and $g(x)$ is a scalar function.
A sampling complexity bound (from Chebyshev's inequality~\cite{markov1884certain}) on the MC integration is:
\begin{equation}\label{eq:chebysev_bounds}
    \Prob \left\{\vert \E [ g(X) ] - \bar{g}_N \vert \geq \epsilon \right\} \leq \rho:= \frac{\text{Var}[g(X)]}{N\epsilon^2},
\end{equation}
where $\epsilon>0$ denotes the error bound and the right hand side of the inequality denoted as $\rho$ is referred to as  risk probability of not satisfying the error bound.
Furthermore, for the bounded random variable within $[0, 1]$, i.e., $g(X)\in [0, 1]$, we know the following Hoeffding's inequality~\cite{hoeffding1994probability} 
\begin{equation}\label{eq:hoeffding_bounds}
    \Prob \left\{\vert \E [ g(X) ] - \bar{g}_N \vert \geq\epsilon \right\} \leq \rho:=2e^{-2N\epsilon^2}
\end{equation}
is applicable for the MC integration.
\subsection{Chi-square distribution}\label{sec:Noncentral_chi-square distribution}
We will use the chi-square distribution for calculating the expectation of the quadratic cost with the random state variable that has Gaussian distribution.
\begin{definition}
Let $(Z_1, Z_2,\dots, Z_k, \dots, Z_K)$ be $K$ independent normally distributed random variables with means $\mu_k$ and unit variance. Then the random variable $ W = \sum_{k=1}^K  Z_k^2$
is distributed according to the noncentral chi-squared distribution written as $W\sim \chi_K^2(\ell)$. It has two parameters: $K$ which specifies the number of degrees of freedom, and $\ell$ which is the sum of the squares of the means, i.e., $\ell=\sum_{k=1}^K\mu_k^2$. The mean of the noncentral chi-squared distribution is $K+\ell$. We denote the central chi-square distribution with degree $K$ as $\chi_K^2:=\chi_K^2(0)$. 
\end{definition}

\section{Results}

\subsection{Sampling complexity with the expectation of the cost}
The MC integration in~\eqref{eq:MC_approx_optimal_control} can be calculated fast using GPU. For example, 2500 sample trajectories with the discrete-time horizon of 150 steps were calculated with control frequency at 60 Hz~\cite{williams2016aggressive}. However, it has not been clear how many samples are needed for reasonable certainty of the MC integration. Hence, we are interested in the sampling complexity to see how well the MC optimal control approximates the discrete-time optimal control, given system dynamics and the cost functions. The following, easy to satisfy, standard  assumptions are made for our analysis.

\begin{assumption}\label{assumption:good choice_of_eps1}
We suppose that we choose the error bound of the MC integration, $\epsilon_1$, smaller than the expectation $\E[e^{-S(\tau)/\lambda}]$ in~\eqref{eq:DT_optimal_control}, i.e.,
\begin{equation}\label{eq:prior_eps1_choice}
    \epsilon_1 < \E[e^{-S(\tau)/\lambda}].
\end{equation}
\end{assumption}
\begin{assumption}\label{assumption:positive_running_cost}
We suppose that the terminal cost $\phi(\cdot)$ and the running cost $q(\cdot, \cdot)$ in~\eqref{eq:DT_stochastic_cost} are non-negative, therefore, $S(\tau)\geq0$.
\end{assumption}
\begin{proposition}\label{prop:sampling_complexity}
Under Assumptions~\ref{assumption:good choice_of_eps1} and~\ref{assumption:positive_running_cost}, the multiplicative and additive error between the optimal control $[\vect{u}_t^*]_i$ in~\eqref{eq:DT_optimal_control} and the MC integration $[\hat{\vect{u}}_t^*]_i$ in~\eqref{eq:MC_approx_optimal_control} are as follows:
\begin{equation}\label{eq:multi_add_error_bound}
    \left(1 - \frac{\epsilon_1}{\E[w]}\right)([\vect{u}_t^*]_i -\epsilon_2) \leq  [\hat{\vect{u}}_t^*]_i \leq \left(1 + \frac{\epsilon_1}{\E[w]}\right)([\vect{u}_t^*]_i +\epsilon_2)
\end{equation}
for all $i \in \{1,2,\dots,m\}$, where we define the weight random variable as $w:=\exp(-(1/\lambda)S(\tau))$ for the ease of notation. The error bounds $\epsilon_1$ and $\epsilon_2$ are from the following MC integration bounds for the number of sample trajectories $N$ that determine the risks $\rho_1$ and $\rho_2$ as:
\begin{equation} \label{eq:MC_bnd1}
    \Prob\{|\hat{E}_1 - \E[w]| \geq \epsilon_1\} \leq \rho_1:=2e^{-N\epsilon_1^2}, 
\end{equation}
\begin{equation} \label{eq:MC_bnd2}
    \Prob\left\{\left|\left[\hat{E}_2 - \E\left[\frac{w \delta_t}{\E[w]}\right]\right]_i\right| \geq \epsilon_2 \right\} \leq \rho_2:= \frac{1+\sqrt{2}}{  N\epsilon_2^2}\left(e^{\frac{2\E[S(\tau)]}{\lambda}}\right)
\end{equation}
where $\hat{E}_1$ denotes the denominator in~\eqref{eq:MC_approx_optimal_control}, i.e., $\hat{E}_1 := \frac{1}{N}\sum_n^N[ \exp(-(1/\lambda)S^{(n)})]$ and $\hat{E}_2$ denotes the sample cost weighted random input\footnote{For the ease of the notation, we drop the constant, $\sqrt{\Delta t}$ in~\eqref{eq:MC_approx_optimal_control} because you can simulate the trajectory or calculate the expectation of the cost considering this constant gain.}, i.e., $\hat{E}_2:=\frac{1}{N}\sum_n^N\left(\frac{w^{(n)}\delta_t^{(n)}}{\E[w]}\right)$.
\end{proposition}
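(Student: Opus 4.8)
The plan is to treat the proposition as three coupled claims: the two Monte-Carlo concentration bounds \eqref{eq:MC_bnd1}--\eqref{eq:MC_bnd2} for the denominator $\hat E_1$ and the weighted input $\hat E_2$, and the final multiplicative-additive bound \eqref{eq:multi_add_error_bound}, which I would obtain by conditioning on the intersection of the two ``good'' events on which \eqref{eq:MC_bnd1} and \eqref{eq:MC_bnd2} hold. The starting point is to rewrite the estimator \eqref{eq:MC_approx_optimal_control} in the ratio form $[\hat{\vect u}_t^*]_i = [\vect u_t]_i + \E[w]\,[\hat E_2]_i/\hat E_1$, using that the MC numerator $\frac1N\sum_n w^{(n)}[\delta_t^{(n)}]_i$ equals $\E[w]\,[\hat E_2]_i$ by the definition of $\hat E_2$, and noting that the mean of $[\hat E_2]_i$ equals the control correction $[\vect u_t^*]_i-[\vect u_t]_i$ appearing in \eqref{eq:DT_optimal_control}.

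First I would establish \eqref{eq:MC_bnd1}. Under Assumption~\ref{assumption:positive_running_cost} we have $S(\tau)\ge 0$, hence the weight $w=\exp(-S(\tau)/\lambda)$ lies in $(0,1]$ and is a bounded random variable. Since $\hat E_1$ is the empirical mean of $N$ i.i.d. copies of $w$, Hoeffding's inequality \eqref{eq:hoeffding_bounds} applies verbatim with range $[0,1]$, yielding the stated exponential risk $\rho_1$. This step is routine.

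The main obstacle is \eqref{eq:MC_bnd2}, for which I would use Chebyshev's inequality \eqref{eq:chebysev_bounds}, so that the risk is controlled by $\Var([w\delta_t/\E[w]]_i)/(N\epsilon_2^2)$. The work is in bounding this variance. I would first drop the subtracted square, $\Var([w\delta_t]_i/\E[w]) \le \E[w^2[\delta_t]_i^2]/\E[w]^2$, then exploit $0<w\le 1$ together with the Gaussian moments of $[\delta_t]_i\sim\mathcal N(0,1)$ (via a Cauchy--Schwarz / second-moment estimate, which is where the numerical constant $1+\sqrt2$ enters), and finally invoke Jensen's inequality on the convex map $s\mapsto e^{-s/\lambda}$ to get $\E[w]\ge e^{-\E[S(\tau)]/\lambda}$, hence $1/\E[w]^2\le e^{2\E[S(\tau)]/\lambda}$. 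The delicate point is that $w$ and $\delta_t$ are \emph{not} independent---both are functions of the same sampled trajectory noise through \eqref{eq:DT_stochastic_cost}---so the variance must be controlled by a moment inequality rather than by factoring the expectation.

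Finally I would assemble \eqref{eq:multi_add_error_bound}. On the intersection of the two good events---whose total risk is at most $\rho_1+\rho_2$ by a union bound---I have $\E[w]-\epsilon_1\le \hat E_1\le \E[w]+\epsilon_1$ and $[\hat E_2]_i$ within $\epsilon_2$ of its mean $[\vect u_t^*]_i-[\vect u_t]_i$. Substituting these into the ratio form of $[\hat{\vect u}_t^*]_i$ and simplifying, the denominator deviation $\epsilon_1$ produces the multiplicative factors $(1\pm\epsilon_1/\E[w])$ while the numerator deviation $\epsilon_2$ produces the additive $\pm\epsilon_2$ terms. Here Assumption~\ref{assumption:good choice_of_eps1}, $\epsilon_1<\E[w]$, is exactly what keeps $\hat E_1$ (and the factor $1-\epsilon_1/\E[w]$) strictly positive, so the two-sided bound is well posed. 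The only care needed is sign bookkeeping when pairing the upper/lower numerator bound with the upper/lower denominator bound.
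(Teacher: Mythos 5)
Your proposal is correct and structurally the same as the paper's proof: the same ratio decomposition (the paper writes $[\hat{\vect{u}}_t^*]_i = [\vect{u}_t]_i + \hat{A}[\hat{B}]_i$ with $\hat{A} := \E[w]/\hat{E}_1$ and $\hat{B} := \hat{E}_2$, which is exactly your $[\vect{u}_t]_i + \E[w][\hat{E}_2]_i/\hat{E}_1$), Hoeffding's inequality on the bounded weight $w \in [0,1]$ for \eqref{eq:MC_bnd1}, Chebyshev's inequality plus a variance bound plus Jensen's inequality ($\E[w] \ge e^{-\E[S(\tau)]/\lambda}$) for \eqref{eq:MC_bnd2}, and the pairing of the two deviation bounds, with Assumption~\ref{assumption:good choice_of_eps1} guaranteeing positivity of the multiplicative factors, to assemble \eqref{eq:multi_add_error_bound}. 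The one place you genuinely diverge is the variance estimate at the heart of \eqref{eq:MC_bnd2} (the paper's Lemma~\ref{lemma:inverse_of_expectation}): the paper invokes the product-variance identity $\Var(XY) = \Cov(X^2,Y^2) + \E(X^2)\E(Y^2) - (\E(XY))^2$ with Cauchy--Schwarz on the covariance term (Appendix, \eqref{eq:bound_var_product}), which with $\Var([\delta_t]_i^2)=2$, $\E[[\delta_t]_i^2]=1$, $\Var(w^2)\le 1$, $\E(w^2)\le 1$ produces exactly the constant $\sqrt{2}+1$; you instead drop the subtracted squared mean and bound the raw second moment, $\Var(w[\delta_t]_i) \le \E[w^2[\delta_t]_i^2] \le \E[[\delta_t]_i^2] = 1$, using $0<w\le 1$ pointwise. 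Your route treats the dependence between $w$ and $\delta_t$ just as rigorously (no expectation is ever factored) and is simpler and sharper: it yields the constant $1$ rather than $1+\sqrt{2}$, so the stated bound holds a fortiori. Your only imprecision is attributing the constant $1+\sqrt{2}$ to your own Cauchy--Schwarz step --- your argument never produces that number; you should instead say your variance bound is dominated by the one stated in the proposition. Finally, one subtlety your write-up shares with the paper rather than a gap relative to it: on the good event one actually gets $\hat{A} \le \E[w]/(\E[w]-\epsilon_1) = 1/(1-\epsilon_1/\E[w])$, which exceeds the claimed factor $1+\epsilon_1/\E[w]$, so the clean factors $(1\pm\epsilon_1/\E[w])$ in \eqref{eq:multi_add_error_bound} strictly require either the $\hat{E}_1$-denominated form $1\pm\epsilon_1/\hat{E}_1$ (the first, correct, bound in the paper's Lemma~\ref{lem:MC_Approx_bnds}) or the factor $1/(1-\epsilon_1/\E[w])$ on the upper side; since the paper's own proof makes the identical substitution, this does not distinguish your proof from theirs.
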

\begin{proof}
To prove the above, we have the following lemmas.
\begin{lemma}\label{lem:MC_Approx_bnds}
The multiplicative and additive error between the optimal control $\vect{u}_t^*$ in~\eqref{eq:DT_optimal_control} and the MC integration $\hat{\vect{u}}_t^*$ in~\eqref{eq:MC_approx_optimal_control} are:
{\small
\begin{equation}\label{eq:multi_add_error_bound2}
    \left(1 - \frac{\epsilon_1}{\E[w]}\right)([\vect{u}_t^*]_i -\epsilon_2) \leq  [\hat{\vect{u}}_t^*]_i \leq \left(1 + \frac{\epsilon_1}{\E[w]}\right)([\vect{u}_t^*]_i +\epsilon_2)
\end{equation}
}
for all $i \in \{1,2,\dots,m\}$.
\end{lemma}
\begin{proof}
Rewriting the optimal control in~\eqref{eq:DT_optimal_control} with the short notation, multiplying and dividing by $\E[w]$ 
\begin{equation*}
    \vect{u}^*_t = \vect{u}_t+\frac{\E[w \delta_t]}{\E[w]} = \vect{u}_t+\left(\frac{\E[w]}{\E[w]}\right)\E\left[\frac{w \delta_t}{\E[w]}\right]= \vect{u}_t + AB,\\
\end{equation*}
where we denote $A:=\frac{\E[w]}{\E[w]}=1$; $B:=\E\left[\frac{w \delta_t}{\E[w]}\right]$.
The MC integration counter part is 
\begin{equation*}
\begin{aligned}
    \hat{\vect{u}}^*_t &= \vect{u}_t+\frac{\frac{1}{N}\sum_n^Nw^{(n)}\delta^{(n)}_t}{\frac{1}{N}\sum_n^N w^{(n)}} = \vect{u}_t+\left(\frac{\E[w]}{\hat{E}_1}\right)\frac{1}{N}\sum_n^N\left(\frac{w^{(n)}\delta^{(n)}_t}{\E[w]}\right)  \\
    &= \vect{u}_t + \hat{A}\hat{B},
\end{aligned}
\end{equation*}
where we denote $\hat{A}:=\frac{\E[w]}{\frac{1}{N}\sum_n^Nw^{(n)}}$; $\hat{B}:=\frac{1}{N}\sum_n^N\left(\frac{w^{(n)}\delta_t^{(n)}}{\E[w]}\right)$.
The error of the approximation is $\tilde{\vect{u}}_t = \hat{\vect{u}}^*_t - \vect{u}_t^* = AB - \hat{A}\hat{B}$.
With elementary calculations with~\eqref{eq:MC_bnd1}, we found the following error bounds of  $\hat{A}$ from $A=1$ as
\begin{equation*}
    1 - \frac{\epsilon_1}{\hat{E}_1} \leq \hat{A} \leq 1 + \frac{\epsilon_1}{\hat{E}_1} \quad \text{and} \quad 1 - \frac{\epsilon_1}{\E[w]} \leq \hat{A} \leq 1 + \frac{\epsilon_1}{\E[w]}.
\end{equation*}
And the other error bound of $\hat{B}$ from $B$ is as $[B]_i - \epsilon_2 \leq [\hat{B}]_i \leq [B]_i + \epsilon_2$,
for all $i \in \{1,2,\dots,m\}$.
Using~\eqref{eq:prior_eps1_choice}, we know that  both limits of the bound for $\hat{A}$ are positive. So, multiplying the bounds for $\hat{A}$ and $\hat{B}$ results in~\eqref{eq:multi_add_error_bound2}.
\end{proof}

To use the sampling complexity bounds in Lemma~\ref{lem:MC_Approx_bnds}, we need to study the variances $\Var(w)$ and $\Var(\frac{w\delta}{\E[w]})$. A bound on $\Var(w)$ is stated in the following lemma.
\begin{lemma}\label{lem:bnd_var_w}
$\Var(w) \leq (1-\E[w])\E[w] \leq \E[w]\leq 1$
\end{lemma}
\begin{proof}
 Note that $w = e^{-\frac{S(\tau)}{\lambda}}$ and $S \geq 0$. So, $w \in [0,1]$. Then using the known properties of bounded random variable and its variance the above follows.
\end{proof}
A bound on $\Var(\frac{w\delta_t}{\E[w]})$ is stated as in the following lemma.
\begin{lemma}\label{lemma:inverse_of_expectation}
The bound on the variance of the weighted control depends on the expectation of the cost $\E[S(\tau)]$  as:
\begin{equation*}
    \Var\left[\frac{w[\delta_t]_i}{\E[w]}\right] =\frac{1}{\E[w]^2} \Var[w[\delta_t]_i] \leq \frac{1+\sqrt{2}}{\left(e^{\frac{-\E[S(\tau)]}{\lambda}}\right)^2}  ,
\end{equation*}
where $[\delta_t]_i$ denotes the $i$th element of the vector-valued random variable $\delta_t \sim \mathcal{N}(\bar{\vect{0}}, \vect{I})$ and $w:= e^{-\frac{S(\tau)}{\lambda}}$.
\end{lemma}
\begin{proof}
Using properties of the product of random variables with $w$ and $[\delta_t]_i \sim \mathcal{N}(0, 1)$, we have the following bound of the variance
 \begin{equation*}
    \begin{aligned}
        \Var(w \delta(i)) &\leq \left(\Var(w^2)\Var([\delta_t]_i^2)\right)^{1/2} +\E(w^2)\E([\delta_t]_i^2) \\
        & = \left(2\Var(w^2)\right)^{1/2} +\E(w^2)  \leq \sqrt{2} + 1,
    \end{aligned}
\end{equation*}
where we used \eqref{eq:bound_var_product} of Appendix for the first inequality,  $\Var([\delta_t]_i^2)=2$  and $\E[[\delta_t]_i^2]=1$ for $[\delta_t]_i^2 \sim \chi_1(0)$, and the last inequality follows using the properties on the moments of the bounded random variable, i.e., $w^2\in[0,1]$ in Lemma~\ref{lem:bnd_var_w}, e.g., $\E(w^2) \leq 1$ and $\Var(w^2) \leq 1$.

Applying the above bound to $\Var\left[\frac{w\delta_t}{\E[w]}\right]$, we arrive at the claimed bound as
\begin{equation}\label{eq:lemma2}
\Var\left[\frac{w\delta_t}{\E[w]}\right] \leq \frac{1+\sqrt{2}}{\E[w]^2} \leq \frac{1+\sqrt{2}}{\left(e^{\frac{-\E[S(\tau)]}{\lambda}}\right)^2}  ,
\end{equation}
where the second inequality follows from Jensen's inequality with the convex function (exponential) $w = e^{-\frac{S(\tau)}{\lambda}}$, i.e.,
\begin{equation*}
e^{-\E[S(\tau)]/\lambda} \leq \E[w]=\E[e^{-\frac{S(\tau)}{\lambda}}] \quad \text{and} \quad \frac{1}{\E[w]} \leq \frac{1}{e^{-\E[S(\tau)]/\lambda}}
\end{equation*}
\end{proof}
The sampling complexity in Proposition~\ref{prop:sampling_complexity} follows by combining the previous Lemmas.
\end{proof}
\begin{remark}
Proposition~\ref{prop:sampling_complexity} is useful for the case where the expectation $\E[S(\tau)]$ can be explicitly calculated given nominal control and the  linear dynamics because  Jensen's inequality converts $\E[e^{-\frac{S(\tau)}{\lambda}}]$ to    $e^{-\E[S(\tau)]/\lambda}$. 
\end{remark}
The proposition states that an error bound of the resultant control $\hat{u}_t^*$ from the desired optimal control $u_t^*$ is expressed with a multiplicative error bound $\epsilon_1$ and an additive error bound $\epsilon_2$ as in~\eqref{eq:multi_add_error_bound}. Given $\epsilon_1$ and $\epsilon_2$ in Proposition~\ref{prop:sampling_complexity}, we can arbitrarily decrease the risk of error $\rho_1$ in~\eqref{eq:MC_bnd1} corresponding to $\epsilon_1$ and the risk of error $\rho_2$ in~\eqref{eq:MC_bnd2} corresponding to $\epsilon_2$ by increasing the number of samples for the MC integration. The risks $\rho_1$ and $\rho_2$ decrease as $N$ increases, i.e., $\rho_1= e^{-N\epsilon_1^2}$ and $\rho_2= O(1/N)$.


For the case where the expectation of the cost $\E[S]$ cannot be calculated, we can still assess uncertainties using Lemma~\ref{lem:MC_Approx_bnds} with the empirical mean $\hat{E}_1$ as in the following corollary.    
\begin{corollary}\label{corollary:sampling_complexity_with_emphrical}
Under Assumptions~\ref{assumption:good choice_of_eps1} and~\ref{assumption:positive_running_cost}, the second MC error bound in~\eqref{eq:MC_bnd2} of Proposition~\ref{prop:sampling_complexity} becomes
\begin{equation}
    \Prob\left\{\left|\hat{E}_2 - \E\left[\frac{w \delta_t}{\E[w]}\right]\right| \geq \epsilon_2 \right\} \leq \frac{1+\sqrt{2}}{  N\epsilon_2^2}\left(\frac{1}{\hat{E}_1 -\epsilon_1}\right)^2,
\end{equation}
where $\epsilon_1$ is from the first MC error bound in ~\eqref{eq:MC_bnd1}, and $\hat{E}_1$ and $\hat{E}_2$ are the empirical means from Proposition~\ref{prop:sampling_complexity}.
\end{corollary}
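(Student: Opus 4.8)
The plan is to reuse the derivation of Lemma~\ref{lemma:inverse_of_expectation} verbatim up to its \emph{first} inequality, and then to replace the closing Jensen step---which bounds $1/\E[w]$ by $e^{\E[S(\tau)]/\lambda}$ and therefore requires $\E[S(\tau)]$ explicitly---with a purely data-driven bound obtained from the already-computed estimator $\hat{E}_1$. Concretely, Lemma~\ref{lemma:inverse_of_expectation} establishes the sample-independent bound
\begin{equation*}
    \Var\left[\frac{w [\delta_t]_i}{\E[w]}\right] = \frac{1}{\E[w]^2}\Var(w[\delta_t]_i) \leq \frac{1+\sqrt{2}}{\E[w]^2},
\end{equation*}
so all that remains is to control the factor $1/\E[w]^2$ without invoking $\E[S(\tau)]$.

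First I would use the first Monte-Carlo concentration bound~\eqref{eq:MC_bnd1} to relate $\E[w]$ to its empirical mean $\hat{E}_1$. On the event $\mathcal{A} := \{\,|\hat{E}_1 - \E[w]| < \epsilon_1\,\}$, which occurs with probability at least $1-\rho_1$, we have $\E[w] > \hat{E}_1 - \epsilon_1$, and hence
\begin{equation*}
    \frac{1}{\E[w]} < \frac{1}{\hat{E}_1 - \epsilon_1},
\end{equation*}
provided the denominator is positive. Positivity of $\hat{E}_1 - \epsilon_1$ is directly observable from the samples (and holds on $\mathcal{A}$ as soon as $\E[w] > 2\epsilon_1$, a mild strengthening of Assumption~\ref{assumption:good choice_of_eps1}), so I would carry it as a checkable condition.

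Finally I would substitute this estimate into Chebyshev's inequality~\eqref{eq:chebysev_bounds} applied to the estimator $\hat{E}_2$ of $\E[w\delta_t/\E[w]]$, yielding
\begin{equation*}
    \Prob\left\{\left|\hat{E}_2 - \E\left[\frac{w \delta_t}{\E[w]}\right]\right| \geq \epsilon_2 \right\} \leq \frac{\Var[w\delta_t/\E[w]]}{N\epsilon_2^2} \leq \frac{1+\sqrt{2}}{N\epsilon_2^2}\left(\frac{1}{\hat{E}_1 - \epsilon_1}\right)^2,
\end{equation*}
which is the claimed bound. The main subtlety---and the step I expect to require the most care---is that the inequality $1/\E[w] < 1/(\hat{E}_1-\epsilon_1)$ holds only on $\mathcal{A}$, so the right-hand side of the corollary is itself a random quantity depending on $\hat{E}_1$. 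The statement must therefore be read conditionally on the first estimator being accurate; it inherits the additional confidence $1-\rho_1$ from~\eqref{eq:MC_bnd1}, and one must keep $\hat{E}_1 - \epsilon_1 > 0$ throughout for the replacement to be legitimate.
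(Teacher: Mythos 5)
Your proposal is correct and follows essentially the same route as the paper's proof: keep the sample-independent bound $\Var\left[\frac{w\delta_t}{\E[w]}\right] \leq \frac{1+\sqrt{2}}{\E[w]^2}$ from the first inequality of~\eqref{eq:lemma2}, replace $1/\E[w]^2$ by $1/(\hat{E}_1-\epsilon_1)^2$ via the concentration bound~\eqref{eq:MC_bnd1}, and close with Chebyshev's inequality. In fact, your explicit observation that this replacement is valid only on the event $\{|\hat{E}_1 - \E[w]| < \epsilon_1\}$ --- so that the stated bound is conditional, inherits the additional confidence $1-\rho_1$, and requires the checkable condition $\hat{E}_1-\epsilon_1>0$ --- is more careful than the paper's own terse proof, which invokes the two-sided inequality $\frac{1}{(\hat{E}_1+\epsilon_1)^2} \leq \frac{1}{\E[w]^2} \leq \frac{1}{(\hat{E}_1-\epsilon_1)^2}$ without flagging that conditioning.
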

\begin{proof}
Using the first inequality on the variance in~\eqref{eq:lemma2}, we have
\begin{equation*}
    \Var\left[\frac{w\delta_t}{\E[w]}\right] \leq \frac{1+\sqrt{2}}{\E[w]^2} 
\end{equation*}
with the following inequality from~\eqref{eq:MC_bnd1} 
\begin{equation*}
    \frac{1}{(\hat{E}_1+\epsilon_1)^2} \leq \frac{1}{(\E[w])^2} \leq \frac{1}{(\hat{E}_1-\epsilon_1)^2}.
\end{equation*}
Then the above follows using the Chebysev inequality.
\end{proof}

\subsection{Calculation of the expectation with linear system and a class of cost functions}

Consider the following linear time-varying (LTV) discrete-time stochastic system that belongs to the discrete time diffusion in~\eqref{eq:DT_Diffusion}:
\begin{equation}\label{eq:model}
    \vect{x}_{t+1} = \vect{A}_t \vect{x}_{t} + \vect{B}_t (\vect{u}_t + \vect{\delta}_t),
\end{equation}
where $\vect{x}_t\in\mathbb{R}^n$, $\vect{u}_t\in\mathbb{R}^m$ and $\vect{\delta}_t\in\mathbb{R}^m$ are the state vector, the nominal control input, and the random perturbation with independent and identically distributed (i.i.d.) normal distribution, i.e.,  $\delta_t \sim \mathcal{N}(\bar{\vect{0}}, \vect{I})$. System matrices $\vect{A}_t$ and $\vect{B}_t$ are known and bounded with appropriate dimensions.


We consider the following class of cost functions as described in the following assumptions.
\begin{assumption}\label{A1}
For the the running cost  in~\eqref{eq:stochastic_cost}, we assume  quadratic cost with an extension to consider the constraint as indicator function 
\begin{equation}\label{eq:a_class_of_running_cost}
    q(\vect{x}_t, t)= (\vect{x}_t - \vect{x}_\text{tgt})^\top \vect{Q} (\vect{x}_t- \vect{x}_\text{tgt}) + \omega_C\mathds{1}_{C_t}(\vect{x}_t),  
\end{equation}
where $\vect{x}_\text{tgt}$ denotes target state value, e.g., goal position and $\mathds{1}_{C_t}$ denotes indicator function such that
\begin{equation*}
\mathds{1}_{C_t}(\vect{x}) :=
\left\{
  \begin{array}{@{}ll@{}}
    1, & \text{if} \quad \vect{x} \in C_t\\
    0, & \text{otherwise},
  \end{array}\right.
\end{equation*}
where $C_t$ is the set to avoid and $\omega_C>0$ is the penalty weight. Also, we assume that  $\vect{Q}$ is a constant positive definite matrix. 
For the terminal cost $\phi(\vect{x}_T)$ in~\eqref{eq:optimal_control_eqn}, we consider the following quadratic cost as
\begin{equation}
 \phi(\vect{x}_T) =    (\vect{x}_T - \vect{x}_\text{tgt})^\top \vect{Q}_T (\vect{x}_T- \vect{x}_\text{tgt}),
\end{equation}
where $\vect{x}_T$ denotes the terminal state and $\vect{Q}_T>0$ denotes the terminal cost matrix.
\end{assumption}
For the above class of cost functions in Assumption~\ref{A1} and LTV system in~\eqref{eq:model} with Gaussian additive noise $\delta_t \sim \mathcal{N}(\bar{\vect{0}}, \vect{I})$ and nominal control $\vect{u}_t$, we can calculate the expectation of the cost $S(\tau)$ in~\eqref{eq:DT_stochastic_cost}, using the properties of the linear transformation of Gaussian random variable. The mean $\hat{\vect{x}}_t$ and covariance $\vect{P}_t$ of the trajectories of the state distribution, i.e., $\vect{X}_t\sim\mathcal{N}(\hat{\vect{x}}_t,\vect{P}_t)$, follow
\begin{equation}\label{eq:random_trajectories}
\begin{aligned}
    \hat{\vect{x}}_{t+1} &= \vect{A}_t \hat{\vect{x}}_{t} + \vect{B}_t \vect{u}_t, \quad  \hat{\vect{x}}_{0} = \vect{x}_{0}\\
    \vect{P}_{t+1} &= \vect{A}_t \vect{P}_{t} \vect{A}_t^\top + \vect{B}_t \vect{B}_t^\top, \quad \vect{P}_{0} = \text{diagonal}(\overline{\vect{0}}).
\end{aligned}
\end{equation}
Note that the above trajectories can be calculated by recursively updating the state distribution $T$ times using~\eqref{eq:random_trajectories}. At each time $t$, $(\hat{\vect{x}}_t,\vect{P}_t)$ determines the running cost and the terminal cost.
\begin{lemma}\label{lem:E_quad_cost}
For the distribution of the quadratic state cost, we have
\begin{equation*}
    (\vect{X}_t - \vect{x}_\text{tgt})^\top \vect{Q} (\vect{X}_t- \vect{x}_\text{tgt}) = \sum_{i=1}^m \lambda_i Z_i,
\end{equation*}
where $\vect{X}_t\sim\mathcal{N}(\hat{\vect{x}}_t,\vect{P}_t)$ is a state distribution, and $Z_i$ follow the noncentral Chi-square distribution below
\begin{equation*}
    Z_i\sim\chi_1([\mu]_i^2), \quad \mu=\tilde{\vect{V}}^{-1}\vect{V}(\hat{\vect{x}}_t-\vect{x}_{tgt})
\end{equation*}
and the eigenvalues $\lambda_i$ are from $\vect{V}^\top\vect{P}_t\vect{V}$, where $\vect{V}^\top \vect{V} = \vect{Q}$ and $\tilde{\vect{V}}^\top \tilde{\vect{V}} = \vect{V}^\top\vect{P}_t\vect{V}$.
The expectation of the quadratic cost is
\begin{equation*}
    \E[(\vect{X}_t - \vect{x}_\text{tgt})^\top \vect{Q} (\vect{X}_t- \vect{x}_\text{tgt})] = \sum_{i=1}^m \lambda_i(1+[\mu]_i^2).
\end{equation*}
\end{lemma}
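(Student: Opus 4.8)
The plan is to reduce the quadratic form to a weighted sum of squares of independent unit-variance Gaussians, which are by definition noncentral chi-square variables. First I would center the state by setting $\vect{Y} := \vect{X}_t - \vect{x}_\text{tgt}$, so that $\vect{Y} \sim \mathcal{N}(\vect{m}, \vect{P}_t)$ with $\vect{m} := \hat{\vect{x}}_t - \vect{x}_\text{tgt}$, and the cost of interest is $\vect{Y}^\top \vect{Q} \vect{Y}$. Since $\vect{Q} > 0$, I can factor $\vect{Q} = \vect{V}^\top \vect{V}$ (taking $\vect{V}$ to be the symmetric positive-definite square root, so that $\vect{V}^\top = \vect{V}$) and introduce $\vect{W} := \vect{V} \vect{Y}$, giving $\vect{Y}^\top \vect{Q} \vect{Y} = \Vert \vect{W} \Vert^2$. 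By the affine-transformation property of Gaussians, $\vect{W} \sim \mathcal{N}(\vect{V}\vect{m},\, \vect{V}\vect{P}_t\vect{V})$, and because $\vect{V}$ is symmetric this covariance is exactly the matrix $\vect{V}^\top \vect{P}_t \vect{V}$ appearing in the statement.

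Next I would diagonalize the covariance. Let $\vect{V}^\top \vect{P}_t \vect{V} = \vect{U}\,\text{diag}(\lambda_1,\dots,\lambda_n)\,\vect{U}^\top$ be its spectral decomposition with $\vect{U}$ orthogonal, so the $\lambda_i \geq 0$ are precisely the eigenvalues named in the lemma (equivalently the eigenvalues of $\vect{Q}\vect{P}_t$). Setting $\vect{W}' := \vect{U}^\top \vect{W}$ leaves the norm invariant, $\Vert \vect{W} \Vert^2 = \Vert \vect{W}' \Vert^2 = \sum_i [\vect{W}']_i^2$, while making the coordinates independent with $[\vect{W}']_i \sim \mathcal{N}(\sqrt{\lambda_i}\,[\mu]_i,\, \lambda_i)$, where $[\mu]_i := [\vect{U}^\top \vect{V}\vect{m}]_i / \sqrt{\lambda_i}$ collects the standardized means (in stacked form $\mu = \tilde{\vect{V}}^{-1}\vect{V}\vect{m}$ for the square-root factor $\tilde{\vect{V}}$ of $\vect{V}^\top\vect{P}_t\vect{V}$). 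Then $Z_i := [\vect{W}']_i^2 / \lambda_i$ is the square of a unit-variance Gaussian with mean $[\mu]_i$, so $Z_i \sim \chi_1^2([\mu]_i^2)$ by the definition of the noncentral chi-square, and $\vect{Y}^\top \vect{Q} \vect{Y} = \sum_i \lambda_i Z_i$, which is the first claim.

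The expectation then follows by linearity together with the recorded fact that the mean of $\chi_1^2(\ell)$ is $1 + \ell$: $\E[\vect{Y}^\top \vect{Q} \vect{Y}] = \sum_i \lambda_i\, \E[Z_i] = \sum_i \lambda_i(1 + [\mu]_i^2)$. As an independent sanity check I would verify this against the standard identity $\E[\vect{Y}^\top \vect{Q} \vect{Y}] = \text{tr}(\vect{Q}\vect{P}_t) + \vect{m}^\top \vect{Q}\vect{m}$, using $\sum_i \lambda_i = \text{tr}(\vect{V}^\top\vect{P}_t\vect{V}) = \text{tr}(\vect{Q}\vect{P}_t)$ and $\sum_i \lambda_i [\mu]_i^2 = \Vert \vect{U}^\top \vect{V}\vect{m} \Vert^2 = \Vert \vect{V}\vect{m} \Vert^2 = \vect{m}^\top \vect{Q}\vect{m}$; the agreement confirms that the $\lambda_i$ and $[\mu]_i$ are paired correctly.

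The main obstacle is bookkeeping rather than depth. I must keep the two square-root factorizations ($\vect{V}$ for $\vect{Q}$ and $\tilde{\vect{V}}$ for the transformed covariance $\vect{V}^\top\vect{P}_t\vect{V}$) consistent with the transpose conventions, which is why I fix $\vect{V}$ to be symmetric so that the covariance of $\vect{W}$ is literally $\vect{V}^\top\vect{P}_t\vect{V}$ and $\tilde{\vect{V}}$ can be taken as its matching square root, yielding exactly $\mu = \tilde{\vect{V}}^{-1}\vect{V}(\hat{\vect{x}}_t - \vect{x}_\text{tgt})$. The second delicate point is degeneracy: since $\vect{P}_0 = \text{diag}(\overline{\vect{0}})$ and $\vect{P}_t$ may be singular at early times, some $\lambda_i$ vanish; for those indices the term $\lambda_i Z_i$ is identically zero and the normalization defining $[\mu]_i$ is vacuous, so I would state the representation for nonsingular $\vect{P}_t$ and absorb the null directions as deterministic zero-weight contributions, which leaves both the distributional identity and the expectation formula intact.
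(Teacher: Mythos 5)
Your proof is correct and follows essentially the same route as the paper's: factor $\vect{Q}=\vect{V}^\top\vect{V}$, turn the cost into the squared norm of a Gaussian with covariance $\vect{V}^\top\vect{P}_t\vect{V}$, whiten, and invoke the mean of the noncentral chi-square. It is in fact more careful than the paper's own sketch, which jumps from $\vect{Z}^\top\tilde{\vect{V}}^\top\tilde{\vect{V}}\vect{Z}$ with $\vect{Z}\sim\mathcal{N}(\mu,\vect{I})$ directly to the conclusion without ever performing the orthogonal rotation that diagonalizes $\vect{V}^\top\vect{P}_t\vect{V}$; your explicit step $\vect{W}'=\vect{U}^\top\vect{W}$ with $[\mu]_i=[\vect{U}^\top\vect{V}\vect{m}]_i/\sqrt{\lambda_i}$ is precisely what makes the pairing of each eigenvalue $\lambda_i$ with its noncentrality $[\mu]_i^2$ legitimate, and your trace-identity sanity check confirms that pairing. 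One caveat on your closing remark: when $\vect{P}_t$ is singular, a null direction with $\lambda_i=0$ but $[\vect{U}^\top\vect{V}\vect{m}]_i\neq 0$ still contributes the deterministic amount $[\vect{U}^\top\vect{V}\vect{m}]_i^2$ to the quadratic form, so such directions are not ``zero-weight'' and the representation $\sum_i\lambda_i Z_i$ (and the expectation formula) genuinely fails there; restricting the statement to nonsingular $\vect{P}_t$, as you also propose, is the correct resolution---and it is what the lemma implicitly assumes, since $\tilde{\vect{V}}^{-1}$ must exist.
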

\begin{proof}
The above follows after a few changes of variables and eigen-decompositions of the  matrices. Let $\tilde{\vect{Z}}:=\vect{V}(\vect{X}_t - \vect{x}_{tgt})$ with the eigen-decomposition  of the positive definite matrix $Q$, $\vect{V} \vect{V}^\top = \vect{Q}$ so that $(\vect{X}_t - \vect{x}_\text{tgt})^\top \vect{Q} (\vect{X}_t- \vect{x}_\text{tgt}) = \tilde{\vect{Z}}^\top \tilde{\vect{Z}}$,
where $\tilde{\vect{Z}}\sim\mathcal{N}\left(\vect{V}(\hat{\vect{x}}_t-\vect{x}_{tgt}),\vect{V}^\top\vect{P}_t\vect{V}\right)$.
Another change of the variable $\vect{Z}:=\tilde{\vect{V}}^{-1}\tilde{\vect{Z}}$ with the eigen-decomposition $\tilde{\vect{V}}^\top \tilde{\vect{V}} = \vect{V}^\top\vect{P}_t\vect{V}$ gives
$
     \tilde{\vect{Z}}^\top \tilde{\vect{Z}}=\vect{Z}^\top \tilde{\vect{V}}^\top\tilde{\vect{V}} \vect{Z} , \quad \vect{Z}\sim\mathcal{N}(\tilde{\vect{V}}^{-1}\vect{V}\left(\hat{\vect{x}}_t-\vect{x}_{tgt}),\vect{I}\right).
$
Let $Z_i=[\vect{Z}]_i\sim\mathcal{N}([\mu]_i, 1)$~and use the mean of the noncentral Chi-squared distribution in~\ref{sec:Noncentral_chi-square distribution} to get the desired result.
\end{proof}
Now, we calculate the expectation of the indicator function in~\eqref{eq:a_class_of_running_cost}. Given a general compact set as an obstacle in Figure~\ref{fig:obs_gaussian_state}, integrating the multivariate Gaussian distribution over the compact set for collision risk calculation does not have closed form solution. So, we use a conservative set as the constraint set $C_t$ at each time $t$ to consider the risk of collision to the obstacle $O$ as stated in the following assumption.
\begin{assumption}\label{A:Convex hull} Suppose
a static obstacle set $O\subseteq{\mathbb{R}^m}$ is a convex hull. Furthermore, we are given with the closest vertex $\vect{c}_t^*\in O$. Then the conservative set $C_t$ is the outer of the circle that is centered at $\hat{\vect{x}}_t$ and passing $\vect{c}_t^*$ with the shape determined by $\vect{P}_t$ as shown in Figure~\ref{fig:obs_gaussian_state}. Note that $\Prob\{\vect{X}_t \in O \} \leq \Prob\{\vect{X}_t \in C_t \}$ because $O \subseteq C_t$.
\begin{equation*}
    C_t = \{\vect{X}_t: (\vect{X}_t-\hat{\vect{x}}_t)^\top (\vect{X}_t-\hat{\vect{x}}_t) \geq (\vect{c}_t^*-\hat{\vect{x}}_t)^{\top} (\vect{c}_t^*-\hat{\vect{x}}_t)  \}
\end{equation*}
\end{assumption}
\begin{figure}[thpb]
\centering
 \includegraphics[width=0.35\textwidth]{ 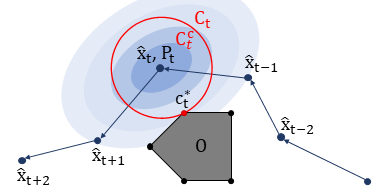}  
 \caption{Collision risk with the state distribution $\mathcal{N}(\hat{\vect{x}}_t, \vect{P}_t)$.}
\medskip
\label{fig:obs_gaussian_state}
\vspace{-0.1in}
\end{figure}
\begin{lemma}\label{lemma:indicator_cost}
Under Assumption~\ref{A:Convex hull}, the expectation of the indicator is 
\begin{equation*}
    \E[\mathds{1}_{C_t}(\vect{X}_t)] = \Prob\{\vect{X}_t \in C_t \} = 1- F_{\chi^2_d}((\vect{c}^*-\hat{\vect{x}}_t)^\top \vect{P}_t^{-1}(\vect{c}^*-\hat{\vect{x}}_t)),
\end{equation*}
where $F_{\chi^2_d}(\cdot)$ denotes cumulative distribution function of the chi-squared distribution  $\chi^2_d$.
\end{lemma}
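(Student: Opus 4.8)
The plan is to reduce the probability to the upper tail of a \emph{central} chi-squared random variable by whitening the Gaussian state. First, since $\mathds{1}_{C_t}$ is the indicator of the event $\{\vect{X}_t \in C_t\}$, the expectation $\E[\mathds{1}_{C_t}(\vect{X}_t)]$ equals $\Prob\{\vect{X}_t \in C_t\}$ directly from the definition of the expectation of an indicator, which gives the first equality for free. The remaining work is therefore to evaluate this probability for $\vect{X}_t \sim \mathcal{N}(\hat{\vect{x}}_t, \vect{P}_t)$.

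Next I would read the membership condition of $C_t$ in the Mahalanobis metric induced by $\vect{P}_t$, consistent with the description that the boundary is the ellipsoid ``with the shape determined by $\vect{P}_t$'' passing through $\vect{c}^*$; that is, $\vect{X}_t \in C_t$ precisely when $(\vect{X}_t - \hat{\vect{x}}_t)^\top \vect{P}_t^{-1}(\vect{X}_t - \hat{\vect{x}}_t) \geq r^2$, where $r^2 := (\vect{c}^* - \hat{\vect{x}}_t)^\top \vect{P}_t^{-1}(\vect{c}^* - \hat{\vect{x}}_t)$ is a deterministic threshold. Introducing the whitening transformation $\vect{Y} := \vect{P}_t^{-1/2}(\vect{X}_t - \hat{\vect{x}}_t)$ — the same change-of-variables idea used in Lemma~\ref{lem:E_quad_cost} — gives $\vect{Y} \sim \mathcal{N}(\bar{\vect{0}}, \vect{I})$, so the left-hand quadratic form becomes $\Vert\vect{Y}\Vert^2 = \sum_i [\vect{Y}]_i^2$. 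Because $\vect{Y}$ is centered at the origin, this is a sum of squares of independent standard normals, hence a central chi-squared variable $\chi^2_d$ with $d$ equal to the dimension of $\vect{X}_t$ (noncentrality $\ell = 0$).

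Finally, I would write $\Prob\{\vect{X}_t \in C_t\} = \Prob\{\Vert\vect{Y}\Vert^2 \geq r^2\} = 1 - \Prob\{\Vert\vect{Y}\Vert^2 \leq r^2\} = 1 - F_{\chi^2_d}(r^2)$, which is exactly the claimed expression once $r^2$ is substituted. The step I expect to require the most care is the metric: the set in Assumption~\ref{A:Convex hull} must be interpreted with the $\vect{P}_t^{-1}$ weighting (the ``shape determined by $\vect{P}_t$'') rather than the plain Euclidean norm, since only then does the quadratic form whiten to an isotropic $\chi^2_d$ — under a plain Euclidean radius the tail probability would not collapse to a single chi-squared CDF unless $\vect{P}_t$ were isotropic. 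The one further point to confirm is that the distribution is \emph{central} and not noncentral, which holds because the whitened variable is centered exactly at its own mean $\hat{\vect{x}}_t$, eliminating the noncentrality parameter $\ell$ that appeared in Lemma~\ref{lem:E_quad_cost}.
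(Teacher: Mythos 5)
Your proof is correct and arrives at the claimed formula by the same basic mechanism as the paper's proof --- centering, whitening the Gaussian, and reading off a central chi-squared tail --- but the two arguments diverge at the one step that actually matters, and yours is the sound one. The paper's proof takes the set literally as the Euclidean condition displayed in Assumption~\ref{A:Convex hull}, i.e.\ (after centering) $\{\vect{X}:\vect{X}^\top\vect{X}\geq \vect{c}^{*\top}\vect{c}^*\}$, substitutes $\vect{X}=\vect{V}\vect{Z}$ with $\vect{V}\vect{V}^\top=\vect{P}$, and then passes from $\vect{Z}^\top\vect{V}^\top\vect{V}\vect{Z}\geq\vect{c}^{*\top}\vect{c}^*$ to $\vect{Z}^\top\vect{Z}\geq\vect{c}^{*\top}\vect{P}^{-1}\vect{c}^*$ by invoking ``the vector version'' of the scalar equivalence $z^2v^2\geq c^2 \Leftrightarrow z^2\geq c^2/v^2$. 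That equivalence has no valid vector analogue unless $\vect{P}$ is a scalar multiple of the identity: the Euclidean ball does not whiten to a sphere, so these two sets differ in general and the resulting probability is not a single $\chi^2_d$ tail. You instead interpret membership in $C_t$ in the Mahalanobis metric induced by $\vect{P}_t$ (reading ``shape determined by $\vect{P}_t$'' as an ellipsoidal boundary through $\vect{c}^*$), under which the whitened condition is exactly $\Vert\vect{Y}\Vert^2\geq r^2$ with $r^2=(\vect{c}^*-\hat{\vect{x}}_t)^\top\vect{P}_t^{-1}(\vect{c}^*-\hat{\vect{x}}_t)$, and the stated CDF expression follows rigorously; your closing remark that a plain Euclidean radius would \emph{not} collapse to one chi-squared CDF identifies precisely the flaw in the paper's own step. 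The one caveat your reading introduces is downstream of the lemma: with $C_t$ ellipsoidal, the containment $O\subseteq C_t$ asserted in Assumption~\ref{A:Convex hull} (which justifies $C_t$ as a conservative surrogate for the obstacle) requires $\vect{c}^*$ to be closest in the Mahalanobis metric rather than the Euclidean one, so that assumption should be restated accordingly; but as a proof of the equality claimed in the lemma, your argument is the correct one.
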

\begin{proof}
For the ease of notation, without loss of the generality, we can translate the coordinates by $\hat{\vect{x}}_t$ and consider the distribution $\vect{X}_t\sim \mathcal{N}(\bar{\vect{0}}, \vect{P}_t)$ and keep the notation of the closest vertex $\vect{c}^*_t$. Also, we drop the time index $t$ because $t$ is fixed within this proof. Then the set $C$ is written with the following equivalent inequalities:
\begin{equation*}
\begin{aligned}
    C &= \{\vect{X}: \vect{X}^\top \vect{X} \geq \vect{c}^{*\top} \vect{c}^*  \} 
      = \{\vect{Z}: \vect{Z}^\top \vect{V}^\top \vect{V} \vect{Z} \geq \vect{c}^{*\top} \vect{c}^*  \} \\
      &= \{\vect{Z}: \vect{Z}^\top \vect{Z} \geq \vect{c}^{*\top} \vect{P}^{-1} \vect{c}^*  \}\\
      &= \{W: W=\sum_{i=1}^n \vect{Z}(i)^2 \geq \vect{c}^{*\top}\vect{P}^{-1} \vect{c}^*  \}
\end{aligned}
\end{equation*}
where $\vect{Z} = \vect{V}^{-1}\vect{X}$ with the eigen-decomposition $\vect{V} \vect{V}^\top = \vect{P}$  makes the univariate Gaussian $\vect{Z}\sim\mathcal{N}(\bar{\vect{0}}, \vect{I})$. Then $W=\sum_{i=1}^n Z_i^2$ follows $\chi_{n}^2$ distribution. For the equivalence of the second and third inequality, we used the vector version of the following fact with a scalar, i.e., $z^2v^2\geq c^2 \Leftrightarrow z^2\geq c^2/v^2$.
\end{proof}

\subsection{Growth of the co-variance  matrices  and the expected cost for unstable dynamics over time-horizon.}
Recall the covariance matrix update equation in~\eqref{eq:random_trajectories} below
\begin{equation*}
    \vect{P}_{t+1} = \vect{A}_t \vect{P}_{t} \vect{A}_t^\top + \vect{B}_t \vect{B}_t^\top, \quad \vect{P}_{0} = \text{diagonal}(\bar{\vect{0}}).
\end{equation*}
Applying basic matrix norm property to the above, we have
\begin{equation}\label{eq:matrix_norm_inequalities}
    \|\vect{P}_{t+1}\|_2 \geq  \sigma_\text{min}^2(\vect{A}_t)\|\vect{P}_{t}\|_2 + \sigma_\text{min}^2(\vect{B}_t)
\end{equation}
Recursively applying the above update law in~\eqref{eq:random_trajectories} and the corresponding matrix norm inequalities in~\eqref{eq:matrix_norm_inequalities}, we have the following: for $t>1$,
\begin{equation*}
    \vect{P}_{t} =  \vect{B}_{t-1} \vect{B}_{t-1}^\top + \sum_{\tau=0}^{t-2}\left(\Pi^{t-1}_{i=\tau+1}\vect{A}_i\right)\vect{B}_\tau \vect{B}_\tau^\top\left(\Pi^{t-1}_{i=\tau+1}\vect{A}_i\right)^\top 
\end{equation*}
with $\vect{P}_{1} =  \vect{B}_{0} \vect{B}_{0}^\top $; and the bound
\begin{equation*}
\begin{aligned}
    \|\vect{P}_{t+1}\| &\geq \sigma^2_\text{min}(\vect{B}_{t-1})+  \sum_{\tau=0}^{t-2}\sigma^2_\text{min}\left(\Pi^{t-1}_{i=\tau+1}\vect{A}_i\right)\sigma^2_\text{min}(\vect{B}_\tau)
\end{aligned}
\end{equation*}
For a special case of LTV systems with all singular values of $\vect{A}_t$ greater than one, it is easy to see that the lower bound increases exponentially over time-horizon.

\begin{corollary}\label{corllary:lambda_and_q_cost}
Consider the expectation of the quadratic cost in Lemma~\ref{lem:E_quad_cost} as below
\begin{equation*}
    \E[(\vect{X}_t - \vect{x}_\text{tgt})^\top \vect{Q} (\vect{X}_t- \vect{x}_\text{tgt})] = \sum_{i=1}^n \lambda_i(1+\mu(i)),
\end{equation*}
where the eigenvalues $\lambda_i$ are from $\vect{V}^\top\vect{P}_t\vect{V}$, where $\vect{V}^\top \vect{V} = \vect{Q}$. Then we have
\begin{equation*}
   \lambda_\text{min}(\vect{Q})\lambda_\text{min}(\vect{P}_t)  \leq \E[(\vect{X}_t - \vect{x}_\text{tgt})^\top \vect{Q} (\vect{X}_t- \vect{x}_\text{tgt})]
\end{equation*}
\end{corollary}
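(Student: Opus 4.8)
The plan is to lower-bound the expression furnished by Lemma~\ref{lem:E_quad_cost} by first discarding the nonnegative noncentrality contributions, and then relating the smallest eigenvalue of the congruence-transformed covariance $\vect{V}^\top\vect{P}_t\vect{V}$ to the product $\lambda_\text{min}(\vect{Q})\lambda_\text{min}(\vect{P}_t)$ through a Rayleigh-quotient argument.

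First I would note that every noncentrality parameter satisfies $[\mu]_i^2 \geq 0$, so $1+[\mu]_i^2 \geq 1$, and that each eigenvalue $\lambda_i$ of $\vect{V}^\top\vect{P}_t\vect{V}$ is nonnegative. The latter holds because $\vect{P}_t$ is a covariance matrix, hence positive semidefinite by its construction in~\eqref{eq:random_trajectories}, and congruence by $\vect{V}$ preserves positive semidefiniteness. Consequently each summand obeys $\lambda_i(1+[\mu]_i^2)\geq \lambda_i \geq 0$, and therefore
\begin{equation*}
\E[(\vect{X}_t - \vect{x}_\text{tgt})^\top \vect{Q}(\vect{X}_t - \vect{x}_\text{tgt})] = \sum_{i} \lambda_i(1+[\mu]_i^2) \geq \sum_i \lambda_i \geq \lambda_\text{min}(\vect{V}^\top\vect{P}_t\vect{V}).
\end{equation*}

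Second I would bound $\lambda_\text{min}(\vect{V}^\top\vect{P}_t\vect{V})$ from below. Writing it as a Rayleigh quotient, $\lambda_\text{min}(\vect{V}^\top\vect{P}_t\vect{V}) = \min_{\|\vect{z}\|=1}(\vect{V}\vect{z})^\top \vect{P}_t(\vect{V}\vect{z})$, I use that $\vect{P}_t$ is symmetric positive semidefinite to get $(\vect{V}\vect{z})^\top\vect{P}_t(\vect{V}\vect{z}) \geq \lambda_\text{min}(\vect{P}_t)\|\vect{V}\vect{z}\|^2 \geq \lambda_\text{min}(\vect{P}_t)\sigma_\text{min}^2(\vect{V})$ for every unit vector $\vect{z}$. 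Since the factorization in Lemma~\ref{lem:E_quad_cost} gives $\vect{V}^\top\vect{V}=\vect{Q}$, we have $\sigma_\text{min}^2(\vect{V}) = \lambda_\text{min}(\vect{V}^\top\vect{V}) = \lambda_\text{min}(\vect{Q})$, and chaining the two displayed bounds yields the claim. The only point requiring care is the direction of the inequalities in the change of variables: I keep the minimization over unit $\vect{z}$ and push the norm estimate through $\sigma_\text{min}(\vect{V})$, rather than optimizing over $\vect{y}=\vect{V}\vect{z}$ directly, so as not to inadvertently enlarge the admissible set. An equally clean alternative is the trace route, in which discarding the noncentrality terms gives $\sum_i \lambda_i = \text{tr}(\vect{V}^\top\vect{P}_t\vect{V}) = \text{tr}(\vect{Q}\vect{P}_t)$, after which $\vect{Q} \succeq \lambda_\text{min}(\vect{Q})\vect{I}$ together with the nonnegativity of the trace of a product of positive semidefinite matrices gives $\text{tr}(\vect{Q}\vect{P}_t) \geq \lambda_\text{min}(\vect{Q})\,\text{tr}(\vect{P}_t) \geq \lambda_\text{min}(\vect{Q})\lambda_\text{min}(\vect{P}_t)$.
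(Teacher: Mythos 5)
Your proof is correct and follows essentially the same route as the paper: both discard the nonnegative noncentrality terms and reduce the claim to the matrix inequality $\lambda_\text{min}(\vect{V}^\top\vect{P}_t\vect{V}) \geq \lambda_\text{min}(\vect{Q})\lambda_\text{min}(\vect{P}_t)$. Your Rayleigh-quotient step (and the trace alternative) simply makes explicit the inequality that the paper's eigen-decomposition argument asserts with a terse ``Hence,'' so the substance is identical.
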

\begin{proof}
Using the eigen-decomposition of $\vect{Q}=\vect{V}_\vect{Q}^\top \Lambda_\vect{Q} \vect{V}_\vect{Q} = (\vect{V}_\vect{Q}^\top \Lambda_\vect{Q}^{1/2})(\Lambda_\vect{Q}^{1/2}\vect{V}_\vect{Q}) = \vect{V}^\top\vect{V} $ and $\vect{P}_t =\vect{V}_\vect{P}^\top \Lambda_\vect{P} \vect{V}_\vect{P}$, we have $
    \vect{V}^\top\vect{P}_t\vect{V} = (\vect{V}_\vect{Q}^\top \Lambda_\vect{Q}^{1/2})\vect{V}_\vect{P}^\top \Lambda_\vect{P} \vect{V}_\vect{P}(\Lambda_\vect{Q}^{1/2}\vect{V}_\vect{Q})
$.
Hence, $\lambda_\text{min}(\vect{Q})\lambda_\text{min}(\vect{P}_t) \leq \lambda_i$.
\end{proof}
For an LTV system, using the above corollary, we can estimate the growth of the expectation of the quadratic cost. To deduce insightful growth estimates with simple assumptions, we consider the linear time-invariant system subject to the following assumption. 

\begin{assumption}\label{assumption:Unstable_LTI}
Suppose the system in~\eqref{eq:random_trajectories} is linear time-invariant with $\vect{A}$ having an \emph{unstable} eigenvalue, i.e., $|\lambda_1(\vect{A})| > 1$ with full-rank and  $\vect{B}$ is a full-rank square matrix. 
\end{assumption}
\begin{corollary}\label{corollary:exploding_cost_unstable}
Under Assumption~\ref{assumption:Unstable_LTI}, the expectation of the quadratic cost in Corollary~\ref{corllary:lambda_and_q_cost} increases exponentially over time-horizon, i.e. there exists $\sigma > 0$ such that
\begin{equation*}
    \sigma |\lambda_1(\vect{A})|^{2t} \leq  \E[(\vect{X}_t - \vect{x}_\text{tgt})^\top \vect{Q} (\vect{X}_t- \vect{x}_\text{tgt})].
\end{equation*}
Then the calculated upper bound of the variance of the weighted control in Proposition~\ref{prop:sampling_complexity}, given by
\begin{equation*}
\Var\left[\frac{w\delta_t}{\E[w]}\right] \leq \frac{1+\sqrt{2}}{  N\epsilon_2^2}\left(e^{\frac{2\E[S]}{\lambda}}\right)
\end{equation*}
increases doubly exponentially over time-horizon, i.e., $e^{C|\lambda_1(\vect{A})|^{2t}}$.
\end{corollary}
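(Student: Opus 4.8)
The plan is to establish the two assertions in turn: first the exponential lower bound on the expected quadratic cost, and then its consequence for the variance bound of Proposition~\ref{prop:sampling_complexity}. For the first assertion, I would start from the closed form of the covariance obtained by unrolling~\eqref{eq:random_trajectories} in the time-invariant case,
\[
\vect{P}_t = \sum_{k=0}^{t-1}\vect{A}^{k}\vect{B}\vect{B}^\top(\vect{A}^\top)^{k},
\]
a controllability-Gramian-type sum of positive semi-definite terms. The key observation is that the quadratic form $u^\top\vect{P}_t u = \sum_{k=0}^{t-1}\|\vect{B}^\top(\vect{A}^\top)^k u\|^2$ is amplified along the unstable direction. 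I would therefore evaluate it at a unit left eigenvector $u$ of $\vect{A}$ associated with $\lambda_1(\vect{A})$, so that $(\vect{A}^\top)^k u = \lambda_1^k u$ and the sum collapses to the geometric series $\sum_{k=0}^{t-1}|\lambda_1|^{2k}\|\vect{B}^\top u\|^2$. Since $\vect{B}$ is full-rank and square, $\|\vect{B}^\top u\|\geq\sigma_\text{min}(\vect{B})>0$, and the series is bounded below by its largest term $|\lambda_1|^{2(t-1)}$; hence $\lambda_\text{max}(\vect{P}_t)\geq u^\top\vect{P}_t u\geq \sigma_\text{min}^2(\vect{B})\,|\lambda_1(\vect{A})|^{2(t-1)}$.

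To transfer this to the cost I would use $\E[(\vect{X}_t-\vect{x}_\text{tgt})^\top\vect{Q}(\vect{X}_t-\vect{x}_\text{tgt})]\geq\mathrm{tr}(\vect{Q}\vect{P}_t)\geq\lambda_\text{min}(\vect{Q})\,\mathrm{tr}(\vect{P}_t)\geq\lambda_\text{min}(\vect{Q})\,\lambda_\text{max}(\vect{P}_t)$, where the first step discards the nonnegative mean-offset contribution and the last uses that the trace of a PSD matrix dominates its largest eigenvalue. Combining with the previous bound yields the claim with $\sigma=\lambda_\text{min}(\vect{Q})\,\sigma_\text{min}^2(\vect{B})/|\lambda_1(\vect{A})|^2>0$. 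For the second assertion, because $S(\tau)$ is a sum of the nonnegative running and terminal costs, $\E[S(\tau)]$ inherits this lower bound from the dominant (latest) time step, so $\E[S(\tau)]\geq\sigma|\lambda_1(\vect{A})|^{2t}$. Substituting into the variance bound $\Var[w\delta_t/\E[w]]\leq(1+\sqrt{2})\,e^{2\E[S(\tau)]/\lambda}$ of Lemma~\ref{lemma:inverse_of_expectation} gives $e^{2\E[S(\tau)]/\lambda}\geq e^{(2\sigma/\lambda)|\lambda_1(\vect{A})|^{2t}}=e^{C|\lambda_1(\vect{A})|^{2t}}$ with $C=2\sigma/\lambda$, which is doubly exponential in $t$.

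The main obstacle is that the growth cannot be read off from Corollary~\ref{corllary:lambda_and_q_cost} as literally written, because $\lambda_\text{min}(\vect{P}_t)$ does \emph{not} grow: from $\vect{P}_t\succeq\vect{B}\vect{B}^\top$ one obtains only the constant bound $\lambda_\text{min}(\vect{P}_t)\geq\sigma_\text{min}^2(\vect{B})$. Likewise, the earlier norm inequality~\eqref{eq:matrix_norm_inequalities}, routed through $\sigma_\text{min}(\vect{A}^k)$, is too weak here, since a single unstable eigenvalue (Assumption~\ref{assumption:Unstable_LTI}) permits the remaining eigenvalues to be stable, making $\sigma_\text{min}(\vect{A}^k)$ decay. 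The real work is therefore to isolate the unstable subspace via the left eigenvector and lower bound $\lambda_\text{max}(\vect{P}_t)$ rather than $\lambda_\text{min}(\vect{P}_t)$. A minor technical point, which I would handle by passing to the real two-dimensional invariant subspace of the conjugate pair, is that $\lambda_1(\vect{A})$ may be complex; the magnitude estimate $|\lambda_1|^{2k}$ then survives unchanged.
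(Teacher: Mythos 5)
Your proof is correct, and it shares the paper's high-level strategy---unroll the covariance recursion into a Gramian-type sum, show exponential growth of $\vect{P}_t$ driven by the unstable eigenvalue, and substitute the resulting lower bound on $\E[S]$ into the variance bound of Lemma~\ref{lemma:inverse_of_expectation}---but both key steps are executed by a genuinely different and tighter route. Where the paper first reduces to the special case $\vect{B}\vect{B}^\top = \sigma\vect{I}$, asserts that the eigenvalues of $\vect{A}^\tau(\vect{A}^\top)^\tau$ are $\lambda_i(\vect{A})^{2\tau}$, and then extends to general full-rank $\vect{B}$ via monotonicity of the Lyapunov recursion ($\vect{B}\vect{B}^\top \succeq \sigma\vect{I}$), you evaluate the Gramian quadratic form directly at an eigenvector of $\vect{A}^\top$ for the unstable eigenvalue, collapsing the sum to a geometric series. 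This sidesteps a real imprecision in the paper: the eigenvalues of $\vect{A}^\tau(\vect{A}^\top)^\tau$ are the squared singular values of $\vect{A}^\tau$, which coincide with $|\lambda_i(\vect{A})|^{2\tau}$ only for normal $\vect{A}$; in general one must pass through $\sigma_\text{max}(\vect{A}^\tau)\geq|\lambda_1(\vect{A})|^{\tau}$ or argue as you do. Second, your trace chain $\E[\cdot]\geq\mathrm{tr}(\vect{Q}\vect{P}_t)\geq\lambda_\text{min}(\vect{Q})\,\mathrm{tr}(\vect{P}_t)\geq\lambda_\text{min}(\vect{Q})\,\lambda_\text{max}(\vect{P}_t)$ supplies a bridge the paper leaves implicit: as you observe, Corollary~\ref{corllary:lambda_and_q_cost} as literally stated bounds the cost only by $\lambda_\text{min}(\vect{Q})\lambda_\text{min}(\vect{P}_t)$, and $\lambda_\text{min}(\vect{P}_t)$ stays bounded when only one eigenvalue is unstable, so the growth must be channeled through the largest eigenvalue exactly as you do. What the paper's route buys is brevity and a reusable comparison principle for Lyapunov recursions; what yours buys is validity for non-normal $\vect{A}$ without repair, an explicit cost bridge, and an explicit constant $\sigma=\lambda_\text{min}(\vect{Q})\sigma^2_\text{min}(\vect{B})/|\lambda_1(\vect{A})|^2$. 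One simplification: since $\vect{P}_t$ is real symmetric, the Hermitian form $u^*\vect{P}_t u\leq\lambda_\text{max}(\vect{P}_t)$ holds for complex unit $u$ as well, so your detour through the real two-dimensional invariant subspace for complex $\lambda_1$ is optional rather than necessary.
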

\begin{proof}
It is easy to see that for the special case $\vect{B} \vect{B}^\top = \sigma \vect{I}$ with $\sigma > 0$, we have
\begin{equation*}
\vect{P}_{t+1} = \vect{A} \vect{P}_{t} \vect{A}^\top + \sigma\vect{I}, \quad \vect{P}_{0} = \text{diagonal}(\bar{\vect{0}})
\end{equation*}
and its recursive sum is
\begin{equation*}
\vect{P}_{t} = \sigma \sum_{\tau=1}^{t-1} \vect{A}^{\tau}(\vect{A}^\top)^\tau  + \sigma\vect{I}, \quad \vect{P}_{0} = \text{diagonal}(\bar{\vect{0}})
\end{equation*}
for $t>1$.
The eigenvalues of $\vect{A}^\tau(\vect{A}^\top)^\tau$ are the $2\tau$th power of the eigenvalues, i.e. $\lambda_i(\vect{A})^{2\tau}$. For the unstable eigenvalue $|\lambda_1(\vect{A})|>1$, $\lambda_1(\vect{A})^{2\tau}$ grows exponentially. To convey the special case for the general case, we use the following basic properties of the quadratic term:
$\vect{A}^\top (\vect{P} + \vect{P}')\vect{A} \geq \vect{A}^\top (\vect{P})\vect{A}$ for all $\vect{P}>0$ and $\vect{P}'>0$; $\vect{A}^\top (\vect{P}')\vect{A} \geq \vect{A}^\top (\vect{P})\vect{A}$ for all $\vect{P}'\geq\vect{P}>0$ and $\vect{P}'>0$. Since we can find $\sigma > 0$ such that $\vect{B} \vect{B}^\top \geq \sigma \vect{I}$, we can use the exponential growth for the special case to the general $\vect{B} \vect{B}^\top$.
\end{proof}
We end this section by a remark that Lemma~\ref{lem:E_quad_cost} and Lemma~\ref{lemma:indicator_cost} can be used to calculate $\E[S]$ in~\eqref{eq:DT_stochastic_cost}. Once we have $\E[S]$, we can apply it with the sampling complexity bound~\eqref{eq:MC_bnd2} of Proposition~\ref{prop:sampling_complexity}. For example, we can apply this approach to the recent work~\cite{yin2021improving} that uses feedback control to steer covariance dynamics $\vect{P}_t$ for improved sample efficiency. The work empirically shows improved sample efficiency through the greater performance of the proposed one compared to conventional MPPI that uses the same number of samples. In contrast, we can use our sampling complexity results to show a lower number of samples is required to satisfy the error bound when feedback control is used, compared to the MPPI.

\section{Numerical Examples}
\subsection{Simulations on unmanned aerial vehicle}

We consider a double integrator model to simulate the unmanned aerial vehicle (UAV) control problem:
{\small
\begin{align*}
    \vect{x}_{t+1}=
    \left[
    \begin{array}{cccc}
     1 & 0 & 0.1 & 0   \\
     0 & 1 &  0 & 0.1 \\
     0 & 0 &  1+a & 0   \\
     0 & 0 &  0 & 1+a
    \end{array}
    \right] \vect{x}_t +
    \left[
    \begin{array}{cc}
     0  & 0  \\
     0  & 0  \\
     0.1 & 0  \\
     0  & 0.1 \\
    \end{array}
    \right](\vect{u}_t+\delta_t) ,
\end{align*}
}
where $a\in[-0.5, 0.5]$ is a model parameter to set the stability of the system for the experiment, e.g., if $a=0.5$ then the system matrix $\vect{A}$ has eigenvalues 1, 1, 1.5, and 1.5, and $\vect{x}_t \in \mathbb{R}^4$ represents the horizontal coordinate, vertical coordinate, horizontal velocity, and vertical velocity. The input $\vect{u}_t \in \mathbb{R}^2$ consists of horizontal and vertical accelerations and $\delta_t\sim \mathcal{N}(\bar{\vect{0}}, \vect{I})$ denotes the random perturbation corresponding to the input. The initial condition is $\vect{x}_0 = [0,0,0,0]^\top$. The objective of the UAV navigation is to reach  to the goal position at $(8,8)$ with zero velocity, i.e., $\vect{x}_\text{tgt} = [8,8,0,0]^\top$, while avoiding collisions with the rectangular obstacle with distance margin at 0.2 as shown in Figure~\ref{fig:traj_given_varying_stability}. We set the cost parameters as $\vect{Q}=\vect{I}$ and the constraint weight as $\omega_C=100$ for the cost equation in~\eqref{eq:a_class_of_running_cost}. The MPPI parameters are as follows: Number of sample trajectories $N=10000$; $\lambda=1$; time horizon $T=40$ for the model predictive control; total simulation time step is 7000 steps. The nominal control was set to zero, i.e., $\vect{u}_t = \bar{\vect{0}}$.

Since the MPC time horizon $T$ is relatively short compared to the entire trajectory, we can easily guess an ideal MPC trajectory that is moving to the target position with the minimum path length. As shown in Figure~\ref{fig:traj_given_varying_stability}, the one with stable velocity dynamics follows such minimum path trajectories. For the other stability conditions, we can see that the performance gets deteriorated as $a$ increases.  
\begin{figure}[thpb]
\centering
 \includegraphics[width=0.3\textwidth]{ 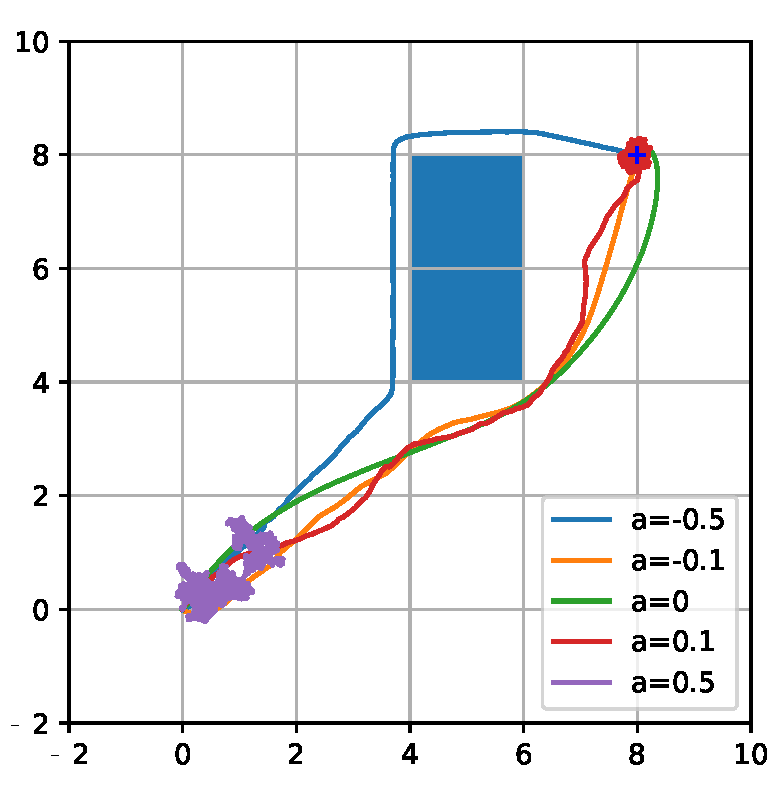}
 \caption{Flight path depending on the stability.}
\medskip
\label{fig:traj_given_varying_stability}
\vspace{-0.15in}
\end{figure}
In Corollary~\ref{corollary:exploding_cost_unstable}, we showed that for the unstable system the expectation of the cost increases exponentially over time horizon when we use the PI methods. Figure~\ref{fig:variance_of_control_samples} shows the sample variance of the random variable $\frac{w \delta_t}{\E[w]}$ in~\eqref{eq:MC_bnd2}. As we have seen in Corollary~\ref{corollary:exploding_cost_unstable}, the variance increases as the time horizon increases and the system becomes unstable. Also, comparing Figure~\ref{fig:variance_of_control_samples} and Figure~\ref{fig:mean_of_the_sample_weight}, the variance's dependence as the inverse of the expectation of the weight in Lemma~\ref{lemma:inverse_of_expectation} is realized. 
\begin{figure}[thpb]
\vspace{-0.15in}
\centering
\begin{subfigure}{.24\textwidth}
  \centering
  \includegraphics[width=.95\linewidth]{ 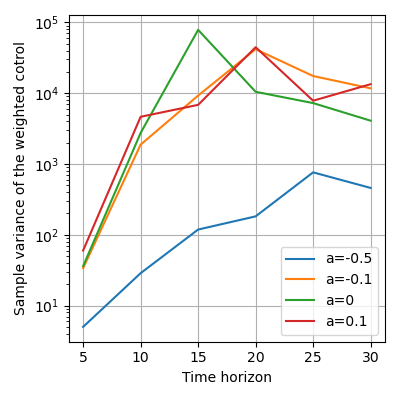}
  \caption{Variance of weighted control}
  \label{fig:variance_of_control_samples}
\end{subfigure}%
\begin{subfigure}{.24\textwidth}
  \centering
  \includegraphics[width=.95\linewidth]{ 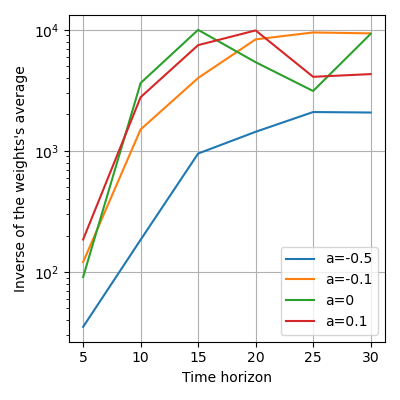}
  \caption{Inverse of average weights}
  \label{fig:mean_of_the_sample_weight}
\end{subfigure}
\caption{Empirical variance of the weighted control and the inverse of the average weight for varying stability and time-horizon.}
\label{fig:test}
\vspace{-0.15in}
\end{figure}
\subsubsection{Use of the sampling complexity}
We set the desired error bounds  $\epsilon_1$, $\epsilon_2$, the allowable risks of failure of $\rho_1$, and $\rho_2$ to the values shown in Table~\ref{tab:sample_uav}, to use the inequality from Proposition~\ref{prop:sampling_complexity} as below
{\small
\begin{equation*}
    \left(1 - \frac{\epsilon_1}{\E[w]}\right)([\vect{u}_t^*]_i -\epsilon_2) \leq  [\hat{\vect{u}}_t^*]_i \leq \left(1 + \frac{\epsilon_1}{\E[w]}\right)([\vect{u}_t^*]_i +\epsilon_2).
\end{equation*}
}
\vspace{-0.15in}
\begin{table}[h]
\tiny
\caption{Required number of sample $max(N_1, N_2)$ given $\epsilon_1=0.01$, $\epsilon_2=0.1$, $\rho_1=0.05$, and $\rho_2=0.05$}
\label{tab:sample_uav}
\begin{center}
\begin{small}
\begin{sc}
\begin{tabular}{crrcc}
\toprule
$a$ & $T$ & $N_2$    & $\frac{\epsilon_1}{\E[w]}$   &$N_1$\\
\midrule
\multirow{3}{*}{-0.5}  &50   & 5789  &  0.011   & \multirow{3}{*}{18444}\\
                       &100   & 6270  &  0.011   & \\
                       &150   & 6793  &  0.012   & \\

\midrule
\multirow{3}{*}{-0.1}  &50   & 22203  &  0.021   & \multirow{3}{*}{18444}\\
                       &100   & 148140  &  0.055   & \\
                       &150   & 1058089  &  0.148   & \\

\midrule
\multirow{3}{*}{0}   &50   & 1899513595      &  1.983   & \multirow{3}{*}{18444}\\
                       &100   & $5.4*10^{37}$     &  $1*10^{15}$   &\\
                       &150   & $2.4*10^{128}$  &  $2*10^{60}$   &\\
                       
\midrule
\multirow{3}{*}{0.1}   &50   &  \multirow{3}{*}{overflow}      &  \multirow{3}{*}{overflow}   & \multirow{3}{*}{18444}\\
                       &100   &      &   &\\
                       &150   &   &     &\\

\bottomrule
\end{tabular}
\end{sc}
\end{small}
\end{center}
\vskip -0.2in
\vspace{-0.05in}
\end{table}

\subsection{Simulations on unmanned ground vehicle}

The unmanned ground vehicle (UGV) simulations use the simple car kinematic model described by:
{\small
\begin{equation}\label{eq:simple_car}
    \begin{bmatrix}
    p^x_{t+1} \\
    p^y_{t+1} \\
    \theta_{t+1} \\
    \phi_{t+1}
    \end{bmatrix}
    =
    \begin{bmatrix}
    p^x_{t} \\
    p^y_{t} \\
    \theta_{t} \\
    \phi_{t}
    \end{bmatrix}
    +
    \begin{bmatrix}
    (\cos \theta_{t}) \Delta t  & 0\\
    (\sin \theta_{t}) \Delta t & 0\\
    (\frac{\tan \phi_{t}}{L}) \Delta t & 0\\
    0                & 1 \Delta t
    \end{bmatrix}
    \begin{bmatrix}
    v_t+\delta_t^v \\
    \omega_t+\delta_t^\omega
    \end{bmatrix},
\end{equation}
}
where $\vect{x}_t = [p^x_t, p^y_t,\theta_t, \phi_t]^\top \in \mathbb{R}^4$ represents the horizontal coordinate, vertical coordinate, heading angle, and steering angle respectively. The input $\vect{u}_t=[v_t,\omega_t]^\top$ consists of velocity and steering angular velocity, and $\vect{\delta}_t=[\delta_t^v,\delta_t^\omega]^\top \sim \mathcal{N}(\bar{\vect{0}}, \vect{I}) $ denotes random perturbation corresponding to the input. The parameter $L=0.5$ is a wheelbase, and $\Delta t=0.1$ is the time step.

From the covariance update law~\eqref{eq:random_trajectories}, we note that $\vect{B}_t \vect{B}_t^\top$ is accumulated over time-horizon. In the above dynamics in~\eqref{eq:simple_car}, the input gain matrix can have unbounded magnitude because $\tan \phi_t$ has the singularity at $\phi_t=\pi/2$. Constraining the steering angle $\phi_t$ to avoid the singularity is a common practice for planning with the simple car kinematics. In order to qualitatively demonstrate how magnitude of the control gain would affect the performance, we consider two constraint sets for $\phi_t$: (1) wide steering angle $[-\pi/2+0.01, \pi/2-0.01]$ and (2) narrow steering angle $[-\pi/12, \pi/12]$. We generate five experiments of MPC with the finite time horizon optimal PI controls for each steering settings with $N=10000$ and $\lambda=0.1$. 
In Figure~\ref{fig:figure3_wide_steer}, the paths of the wide steering angle have greater variance than the paths with the narrow steering angle. This figure aligns with our prediction on the challenges of dealing with large input gain that causes greater variance.
\begin{figure}[thpb]
\centering
 \includegraphics[width=0.3\textwidth]{ 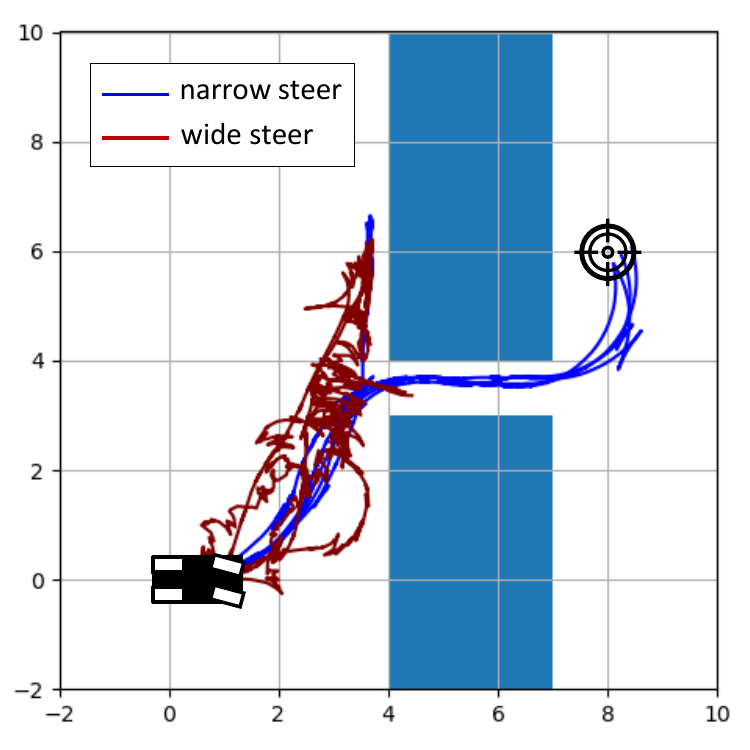}
 \caption{Sample paths of the narrow and wide steering car}
\medskip
\label{fig:figure3_wide_steer}
\vspace{-0.3in}
\end{figure}

\subsubsection{Use of Corollary~\ref{corollary:sampling_complexity_with_emphrical}}
We sample 50000 trajectories and calculate the empirical mean $\hat{E}_1 := \frac{1}{N}\sum_n^N[ \exp(-(1/\lambda)S^{(n)})]$ to apply Corollary~\ref{corollary:sampling_complexity_with_emphrical}. The result is summarized in Table~\ref{tab:sample_ugv}\footnote{For the use of Corollary~\ref{corollary:sampling_complexity_with_emphrical} in Table~\ref{tab:sample_ugv}, we used different $\lambda=10$ for the weight $w:=\exp(-(S/\lambda)$ and the target location at $(1,1)$ from Figure~\ref{fig:figure3_wide_steer} because large cost value passing to the exponential can cause numerical error, e.g., $\exp(-(40/\lambda)$.}.
\begin{table}[th]
\tiny
\caption{Required number of sample $max(N_1, N_2)$ given $\epsilon_1=0.01$, $\epsilon_2=0.1$, $\rho_1=0.05$, and $\rho_2=0.05$}
\label{tab:sample_ugv}
\begin{center}
\begin{small}
\begin{sc}
\begin{tabular}{ccrcc}
\toprule
Steering & $T$ & $N_2$    & $\frac{\epsilon_1}{\hat{E}_1}$   &$N_1$\\
\midrule
Narrow  &200   & 14517  &  0.0170   & 18444\\

\midrule
Wide   &200   & 15274  &  0.0175   & 18444\\

\bottomrule
\end{tabular}
\end{sc}
\end{small}
\end{center}
\vskip -0.2in
\vspace{-0.05in}
\end{table}


\section{CONCLUSION AND DISCUSSION}\label{sec:conclusion}
We calculated a sampling complexity bound of the Monte-Carlo integration used for the path integral methods. The sampling complexity bound depends on the expectations of the cost of the random trajectories simulated with a general nonlinear system. For useful insight, we applied the result to linear systems with quadratic state cost for target tracking and indicator penalty cost for collision avoidance. For an unstable linear system, the required number of samples increases exponentially as the time horizon increases. This result justifies the use of feedback control in the recent works that steer variances with the PI methods. Compared to the works for improved sample efficiency by heuristically modifying the controlled dynamics, our result can be used to justify such modifications in terms of sampling complexity analysis.

Future work includes studying the effect of $\lambda$ considering the trade-off between the performance and the sampling complexity. The naive idea is to increase $\lambda$ for adjustment of the variance that affects the sampling complexity. However, as the author in~\cite{kappen2005path} described, the choice of $\lambda$ does change the behavior of the resulting optimal control. Hence, depending on $\lambda$, the optimal control can behave differently from the deterministic optimal control in terms of performance.

\bibliographystyle{IEEEtran}
\bibliography{mybib}

\section*{Appendix}
 A bound on the variance of the product of $X$ and $Y$:
\begin{equation}\label{eq:bound_var_product}
    \begin{aligned}
        \Var(XY)&=\Cov(X^2,Y^2)+\E(X^2)\E(Y^2)-(\E(XY))^2 \\
                &\leq \Cov(X^2,Y^2)+\E(X^2)\E(Y^2) \\
                &\leq \left(\Var(X^2)\Var(Y^2)\right)^{1/2} + \E(X^2)\E(Y^2)
    \end{aligned}
\end{equation}
where the first equation directly follows from the definition of variance and covariance and the last inequality is due to the Cauchy–Schwarz inequality $\Cov(A,B)^2 \leq \Var(A)\Var(B)$.

\newpage

\end{document}